\DeclareMathOperator{\Ker}{Ker}
\DeclareMathOperator{\sign}{sign}
\def\1{\bm{1}}
\def\valpha{{\bm{\alpha}}}
\def\vdelta{{\bm{\delta}}}
\def\vDelta{{\bm{\Delta}}}
\def\va{{\bm{a}}}
\def\vg{{\bm{g}}}
\def\vs{{\bm{s}}}
\def\vv{{\bm{v}}}
\def\vw{{\bm{w}}}
\def\vx{{\bm{x}}}
\def\vy{{\bm{y}}}
\def\vz{{\bm{z}}}
\def\mA{{\bm{A}}}
\def\mB{{\bm{B}}}
\def\mD{{\bm{D}}}
\def\mH{{\bm{H}}}
\def\mI{{\bm{I}}}
\def\mK{{\bm{K}}}
\def\mL{{\bm{L}}}
\def\mP{{\bm{P}}}
\def\mS{{\bm{S}}}
\def\mU{{\bm{U}}}
\def\mV{{\bm{V}}}
\def\mX{{\bm{X}}}
\def\mSigma{{\bm{\Sigma}}}
\DeclareMathAlphabet{\mathsfit}{\encodingdefault}{\sfdefault}{m}{sl}
\SetMathAlphabet{\mathsfit}{bold}{\encodingdefault}{\sfdefault}{bx}{n}
\def\sI{{\mathbb{I}}}
\def\sR{{\mathbb{R}}}
\newtheorem{theorem}{Theorem}
\newtheorem{lemma}{Lemma}
\newtheorem{assumption}{Assumption}
\newtheorem{proposition}{Proposition}
\title{A Unified Framework for Data Poisoning Attack to Graph-based Semi-supervised Learning}
\author{%
  Xuanqing Liu \\
  Department of Computer Science\\
  UCLA\\
  \texttt{xqliu@cs.ucla.edu} \\
  \And
  Si Si \\
  Google Research\\
  \texttt{sisidaisy@google.com}\\
  \And
  Xiaojin Zhu\\
  Department of Computer Science\\
  University of Wisconsin-Madison\\
  \texttt{jerryzhu@cs.wisc.edu}\\
  \And
  Yang Li\\
  Google Research\\
  \texttt{liyang@google.com}\\
  \And
  Cho-Jui Hsieh\\
  Department of Computer Science\\
  UCLA\\
  \texttt{chohsieh@cs.ucla.edu} \\
}
\begin{document}

\maketitle

\begin{abstract}
In this paper, we proposed a general framework for data poisoning attacks to graph-based semi-supervised learning (G-SSL). In this framework, we first unify different tasks, goals and constraints into a single formula for data poisoning attack in G-SSL, then we propose two specialized algorithms to efficiently solve two important cases --- poisoning regression tasks under $\ell_2$-norm constraint and classification tasks under $\ell_0$-norm constraint. In the former case, we transform it into a non-convex trust region problem and show that our gradient-based algorithm with delicate initialization and update scheme finds the (globally) optimal perturbation. For the latter case, although it is an NP-hard integer programming problem, we propose a probabilistic solver that works much better than the classical greedy method. Lastly, we test our framework on real datasets and evaluate the robustness of G-SSL algorithms. For instance, on the MNIST binary classification problem (50000 training data with 50 labeled), flipping two labeled data is enough to make the model perform like random guess (around 50\% error).  
\end{abstract}
\section{Introduction}
Driven by the hardness of labeling work, graph-based semi-supervised learning (G-SSL)~\cite{zhu2002learning,zhu2003semi,chapelle2009semi} has been widely used to boost the quality of models using easily accessible unlabeled data. The core idea behind it is that both labeled and unlabeled data coexist in the same manifold. For instance, in the transductive setting, we have label propagation~\cite{zhu2002learning} that transfers the label information from labeled nodes to neighboring nodes according to their proximity. While in the inductive case, a graph-based manifold regularizer can be added to many existing supervised learning models to enforce the smoothness of predictions on the data manifold~\cite{belkin2006manifold,sindhwani2005linear}.  G-SSL has received a lot of attention; many of the applications are safety-critical such as drug discovery~\cite{zhang2015label} and social media mining~\cite{speriosu2011twitter}.
\par
We aim to develop systematic and efficient data poisoning methods for poisoning G-SSL models. Our idea is partially motivated by the recent researches on the robustness of machine learning models to adversarial examples~\cite{goodfellow2015explaining,szegedy2013intriguing}. These works mostly show that carefully designed, slightly perturbed inputs -- also known as adversarial examples -- can substantially degrade the performance of many machine learning models. We would like to tell apart this problem from our setting: adversarial attacks are performed during the testing phase and applied to test data~\cite{carlini2017towards,chen2018ead,athalye2018obfuscated,cheng2018seq2sick,papernot2016crafting,cheng2019query}, whereas data poisoning attack is conducted during training phase~\cite{mei2015using,koh2017understanding,xiao2015support,li2016data,zhao2018data}, and perturbations are added to training data only. In other words, data poisoning attack concerns about \emph{how to imperceptibly change the training data to affect testing performance.} As we can imagine, this setting is more challenging than testing time adversarial attacks due to the hardness of propagating information through a sophisticated training algorithm.
\par
Despite the efforts made on studying poisoning attack to supervised models~\cite{mei2015using,koh2017understanding,xiao2015support,li2016data,zhao2018data}, the robustness of semi-supervised algorithms has seldom been studied 
and many related questions remain unsolved. 
For instance, are semi-supervised learning algorithms sensitive to small perturbation of labels? And how do we formally measure the robustness of these algorithms? 
\par
In this paper, we initiate the first systematic study of data poisoning attacks against G-SSL. We mainly cover the widely used label propagation algorithm, but similar ideas can be applied to poisoning manifold regularization based SSL as well (see Appendix 4.2). To poison semi-supervised learning algorithms, we can either change the training labels or features. For label poisoning, we show it is a constrained quadratic minimization problem, and depending on whether it is a regression or classification task, we can take a continuous or discrete optimization method. For feature poisoning, we conduct gradient-based optimization with group Lasso regularization to enforce group sparsity (shown in Appendix 4.2). Using the proposed algorithms, we answer the questions mentioned above with several experiments. Our contributions can be summarized as follows: 
\begin{itemize}[noitemsep,leftmargin=*]
    \item We propose a framework for data poisoning attack to G-SSL that 1) includes both classification and regression cases, 2) works under various kinds of constraints, and 3) assumes both complete and incomplete knowledge of algorithm user (also called ``victim'').
    \item For label poisoning to regression task, which is a nonconvex trust region problem, we design a specialized solver that can find a global minimum in asymptotically linear time. 
    \item For label poisoning attack to classification task, which is an NP-hard integer programming problem, we propose a novel probabilistic solver that works in combination with gradient descent optimizer. Empirical results show that our method works much better than classical greedy methods.
    \item We design comprehensive experiments using the proposed poisoning algorithms on a variety of problems and datasets. 
\end{itemize}
In what follows, we refer to the party running poisoning algorithm as the attacker, and the party doing the learning and inference work as the victim.
\section{Related Work}
Adversarial attacks have been extensively studied recently. Many recent works consider the test time attack, where the model is fixed, and the attacker slightly perturbs a testing example to change the model output completely~\cite{szegedy2013intriguing}. We often formulate the attacking process as an optimization problem~\cite{carlini2017towards}, which can be solved in the white-box setting. 
In this paper, we consider a different area called \emph{data poisoning attack}, where we run the attack during  training time --- an attacker can carefully modify (or add/remove) data in the training set so that the model trained on the poisoned data 
either has significantly degraded performance~\cite{xiao2015support,mei2015using} or has some desired properties~\cite{chen2017targeted,li2016data}.
As we mentioned, this is usually harder than test time attacks since the model is not predetermined. 
Poisoning attacks have been studied in several applications, including multi-task learning~\cite{zhao2018data}, image classification~\cite{chen2017targeted}, matrix factorization for recommendation systems~\cite{li2016data} and online learning~\cite{wang2018data}. However, they did not include semi-supervised learning, and the resulting algorithms are quite different from us.
\par
To the best of our knowledge,
\cite{dai2018adversarial,zugner2018adversarial,wang2018attack} are the only related works on attacking semi-supervised learning models. They conduct \textbf{test time attacks} to Graph Convolutional Network (GCN). In summary, their contributions are different from us in several aspects:
1) the GCN algorithm is quite different from the classical SSL algorithms considered in this paper (e.g. label propagation and manifold regularization). Notably, we only use feature vectors and the graph will be constructed manually with kernel function. 2) Their works are restricted to testing time attacks by assuming the model is \textbf{learned and fixed}, and the goal of attacker is to find a perturbation to fool the established model. Although there are some experiments in \cite{zugner2018adversarial} on poisoning attacks, the perturbation is still generated from \textbf{test time attack} and they did not design task-specific algorithms for the poisoning in the \textbf{training time}. 
In contrast, we consider the data poisoning problem, which happens before the victim trains a model.
\section{Data Poisoning Attack to G-SSL}
\subsection{Problem setting}
We consider the graph-based semi-supervised learning (G-SSL) problem. The input include labeled data $\mX_l\in\sR^{n_l\times d}$ and unlabeled data $\mX_u\in\sR^{n_u\times d}$, we define the whole features $\mX=[\mX_l;\mX_u]$. Denoting the labels of $\mX_l$ as $\vy_l$, our goal is to predict the labels of test data $\vy_u$. The learner applies algorithm $\mathcal{A}$ to predict $\vy_u$ from available data $\{\mX_l, \vy_l, \mX_u\}$. Here we restrict $\mathcal{A}$ to label propagation method, where we first generate a graph with adjacency matrix $\mS$ from Gaussian kernel: $\mS_{ij}=\exp(-\gamma\|\vx_i-\vx_j\|^2)$, where the subscripts $\vx_{i(j)}$ represents the $i(j)$-th row of $\mX$. Then the graph Laplacian is calculated by $\mL=\mD-\mS$, where $\mD=\text{diag}\{\sum_{k=1}^n\mS_{ik}\}$ is the degree matrix. The unlabeled data is then predicted through energy minimization principle~\cite{zhu2003semi}
\begin{equation}
\label{eq:label-prop-problem}
    \min_{\hat{\vy}}\  \frac{1}{2}\sum_{i,j}\mS_{ij}(\hat{\vy}_i-\hat{\vy}_j)^2=\hat{\vy}^\intercal \mL\hat{\vy}, \quad\text{s.t. }\quad\hat{\vy}_{:l}=\vy_l.
\end{equation}
The problem has a simple closed form solution $\hat{\vy}_u=(\mD_{uu}-\mS_{uu})^{-1}\mS_{ul}\vy_l$, where we define $\mD_{uu}=\mD_{[0:u,0:u]}$, $\mS_{uu}=\mS_{[0:u,0:u]}$ and $\mS_{ul}=\mS_{[0:u,0:l]}$. Now we consider the attacker who wants to greatly change the prediction result $\vy_u$ by perturbing the \emph{training data} $\{\mX_l, \vy_l\}$ by small amounts $\{\Delta_x, \vdelta_y\}$ respectively, where $\Delta_x\in\sR^{n_l\times d}$ is the perturbation matrix , and $\vdelta_y\in\sR^{n_l}$ is a vector. This seems to be a simple problem at the first glance, however, 
we will show that the problem of finding optimal perturbation is often intractable, and therefore provable and effective algorithms are needed. To sum up, the problem have several degrees of freedom:
\begin{itemize}[noitemsep,leftmargin=*]
    \item \textbf{Learning algorithm}: Among all graph-based semi-supervised learning algorithms, we primarily focus on the label propagation method; however, we also discuss manifold regularization method in Appendix 4.2.
    \item \textbf{Task}: We should treat the regression task and classification task differently because the former is inherently a continuous optimization problem while the latter can be transformed into integer programming.
    \item \textbf{Knowledge of attacker}: Ideally, the attacker knows every aspect of the victim, including training data, testing data, and training algorithms. However, we will also discuss incomplete knowledge scenario; for example, the attacker may not know the exact value of hyper-parameters. 
    \item \textbf{What to perturb}: We assume the attacker can perturb the label or the feature, but not both. We made this assumption to simplify our discussion and should not affect our findings. 
    \item \textbf{Constraints}: We also assume the attacker has limited capability, so that (s)he can only make small perturbations. It could be measured $\ell_2$-norm or sparsity.
\end{itemize}
\subsection{Toy Example}
\begin{wrapfigure}{l}{0.4\textwidth}
\centering
\resizebox{0.98\linewidth}{!}{
\includegraphics{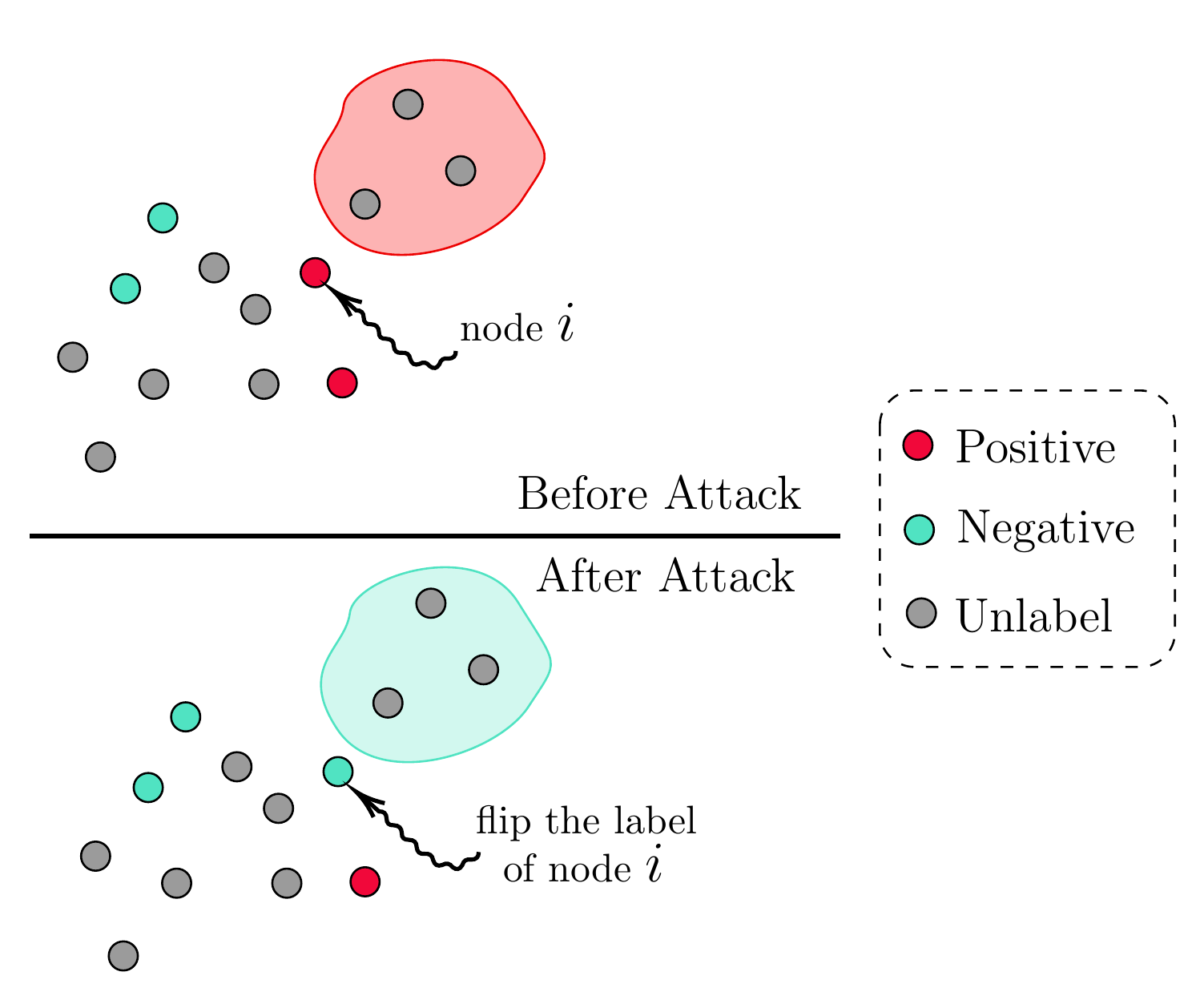}
}
\caption{We show a toy example that illustrates the main idea of the poisoning attack against SSL. By flipping just one training data from positive to negative, the prediction of the whole shaded area will be changed.}\label{fig:motivating-ex}
\vspace{-20pt}
\end{wrapfigure}
We show a toy example in Figure~\ref{fig:motivating-ex} to motivate the data poisoning attack to graph semi-supervised learning (let us focus on label propagation in this toy example). In this example, the shaded region is very close to node-$i$ and yet quite far from other labeled nodes. After running label propagation, all nodes inside the shaded area will be predicted to be the same label as node-$i$. That gives the attacker a chance to manipulate the decision of all unlabeled nodes in the shaded area at the cost of flipping just one node. For example, in Figure~\ref{fig:motivating-ex}, if we change node-$i$'s label from positive to negative, the predictions in the shaded area containing three nodes will also change from positive to negative.
\par
Besides changing the labels, another way to attack is to perturb the features $\mX$ so that the graph structure $\mS$ changes subtly (recall the graph structure is constructed based on pair-wise distances). For instance, we can change the features so that node $i$ is moved away from the shaded region, while more negative label points are moved towards the shaded area. Then with label propagation, the labels of the shaded region will be changed from positive to negative as well. We will examine both cases in the following sections.
\subsection{A unified framework}
The goal of poisoning attack is to modify the data points to maximize the error rate (for classification) or RMSE score (for regression); thus we write the objective as
\begin{equation}
    \label{eq:unified-obj}
    \min_{\substack{\vdelta_y\in\mathcal{R}_1\\\Delta_x\in\mathcal{R}_2}} -\frac{1}{2}\Big\|g\Big((\mD'_{uu}-\mS'_{uu})^{-1}\mS'_{ul}(\vy_l+\vdelta_y)\Big) - h(\vy_u)\Big\|^2_2 \quad\text{s.t. } \{\mD', \mS'\}=\Ker_{\gamma}(\mX_l+\Delta_x).
\end{equation}
To see the flexibility of Eq.~\eqref{eq:unified-obj} in modeling different tasks, different knowledge levels of attackers or different budgets, we decompose it into following parts that are changeable in real applications:
\begin{itemize}[noitemsep,leftmargin=*]
    \item $\mathcal{R}_1$/$\mathcal{R}_2$ are the constraints on $\vdelta_y$ and $\Delta_x$. For example, $\mathcal{R}_1=\{\|\vdelta_y\|_2\le d_{\max}\}$ restricts the perturbation $\vdelta_y$ to be no larger than $d_{\max}$; while $\mathcal{R}_1=\{\|\vdelta_y\|_0\le c_{\max}\}$ makes the solution to have at most $c_{\max}$ non-zeros. As to the choices of $\mathcal{R}_2$, besides $\ell_2$ regularization, we can also enforce group sparsity structure, where each row of $\Delta_x$ could be all zeros.
    \item $g(\cdot)$ is the task dependent squeeze function, for classification task we set $g(x)=\sign(x)$ since the labels are discrete and we evaluate the accuracy; for regression task it is identity function $g(x)=x$, and $\ell_2$-loss is used.
    \item $h(\cdot)$ controls the knowledge of unlabeled data. If the adversary knows the ground truth very well, then we simply put $h(\vy_u)=\vy_u$; otherwise one has to estimate it from Eq.~\eqref{eq:label-prop-problem}, in other words, $h(\vy_u)=\hat{\vy}_u=g\big((\mD_{uu}-\mS_{uu})^{-1}\mS_{ul}\vy_l\big)$.
    \item $\Ker_{\gamma}$ is the kernel function parameterized by $\gamma$, we choose Gaussian kernel throughout.
    \item Similar to $\mS$, the new similarity matrix $\mS'$ is generated by Gaussian kernel with parameter $\gamma$, except that it is now calculated upon poisoned data $\mX_l+\Delta_x$.
    \item Although not included in this paper, we can also formulate targeted poisoning attack problem by changing min to max and let $h(\vy_u)$ be the target. 
\end{itemize}
There are two obstacles to solving Eq.~\ref{eq:unified-obj}, that make our algorithms non-trivial. First, the problem is naturally \emph{non-convex}, making it hard to determine whether a specific solution is globally optimal; secondly, in classification tasks where our goal is to maximize the testing time error rate, the objective is \emph{non-differentiable} under \emph{discrete} domain. Besides, even with hundreds of labeled data, the domain space can be unbearably big for brute force search and yet the greedy search is too myopic to find a good solution (as we will see in experiments).
\par
In the next parts, we show how to tackle these two problems separately. Specifically, in the first part, we propose an efficient solver designed for data poisoning attack to the regression problem under various constraints. Then we proceed to solve the discrete, non-differentiable poisoning attack to the classification problem.
\subsection{\label{sec:regression}Regression task, (un)known label}
\begin{algorithm}[h]
\caption{\label{alg:tr_solver}Trust region problem solver}
\SetAlgoLined
\KwData{Vector $\vg$, symmetric indefinite matrix $\mH$ for problem $\min_{\|\vz\|\le 1}\frac{1}{2}\vz^\intercal\mH\vz +\vg^\intercal\vz$.}
\KwResult{Approximate solution $\vz^*$.}
Initialize $\vz_0=-0.5\frac{\vg}{\|\vg\|}$ and step size $\eta$\;
 \tcc{Phase I: iterate inside sphere $\|\vz_t\|<1$}
 \While{ $\|\vz_t\|<1$ }{
    $\vz_{t+1}=\vz_t-\eta(\mH \vz_t+\vg)$\;
 }
 \tcc{Phase II: iterate on the sphere $\|\vz_t\|=1$}
$\vz_{t'} = \vz_t$\;
\While{$t<\mathrm{max\_iter}$}{
    Choose $\alpha_{t'}$ by line search and do the following projected gradient descent on sphere\;
    $\vz_{t'+1}=\frac{\vz_{t'}-\alpha_{t'}(\mI_d-\vz_{t'}\vz_{t'}^\intercal)(\mH\vz_{t'}+\vg)}{\|\vz_{t'}-\alpha_{t'}(\mI_d-\vz_{t'}\vz_{t'}^\intercal)(\mH\vz_{t'}+\vg)\|}$\;
}
{\bfseries Return} $\vz_{\text{max\_iter}}$
\end{algorithm}
We first consider the regression task where only label poisoning is allowed. This simplifies Eq.~\eqref{eq:unified-obj} as
\begin{subnumcases}{\min_{\|\vdelta_y\|_2\le d_{\max}}}
    -\frac{1}{2}\Big\| (\mD_{uu}-\mS_{uu})^{-1}\mS_{ul}\vdelta_y \Big\|^2_2 & \text{(estimated label)}\label{eq:estimate-label}\\
    -\frac{1}{2}\Big\|(\mD_{uu}-\mS_{uu})^{-1}\mS_{ul}(\vy_l+\vdelta_y) - \vy_u\Big\|^2_2 & \text{(true label)}\label{eq:true-label}
\end{subnumcases}
Here we used the fact that $\hat{\vy}_u=\mK\vy_l$, where we define $\mK=(\mD_{uu}-\mS_{uu})^{-1}\mS_{ul}$. We can solve Eq.~\eqref{eq:estimate-label} by SVD; it's easy to see that the optimal solution should be $\vdelta_y=\pm d_{\max} \vv_1$ and $\vv_1$ is the top right sigular vector if we decompose $(\mD_{uu}-\mS_{uu})^{-1}\mS_{ul}=\mU\mSigma\mV^\intercal$. However, \eqref{eq:true-label} is less straightforward, in fact it is a non-convex trust region problem, which can be generally formulated as
\begin{equation}
    \min_{\|\vz\|_2\le d_{\max}}f(\vz)=\frac{1}{2}\vz^\intercal\mH\vz+\vg^\intercal\vz,\quad \mH \text{ is indefinite}.
\end{equation}
Our case~\eqref{eq:true-label} can thus be described as $\mH=-\mK^\intercal\mK\preceq \bm{0}$ and $\vg=\mK^\intercal(\vy_u-\hat{\vy}_u)$.
Recently \cite{hazan2016linear} proposed a sublinear time solver that is able to find a global minimum in $\mathcal{O}(M/\sqrt{\epsilon})$ time. Here we propose an asymptotic linear algorithm based purely on gradient information, which is stated in Algorithm~\ref{alg:tr_solver} and Theorem~\ref{th:linear-converge}. In Algorithm~\ref{alg:tr_solver} there are two phases, in the following theorems, we show that the phase I ends within finite iterations, and phase II converges with an asymptotic linear rate. We postpone the proof to Appendix 1. 
\begin{theorem}[Convergent]\label{th:converge}
Suppose the operator norm $\|\mH\|_{\mathrm{op}}=\beta$, by choosing a step size $\eta<1/\beta$ with initialization $\vz_0=-\alpha\frac{\vg}{\|\vg\|}$, $0<\alpha<\min(1, \frac{\|\vg\|^3}{|\vg^\intercal\mH\vg|})$. Then iterates $\{\vz_t\}$ generated from Algorithm~\ref{alg:tr_solver} converge to the global minimum.
\end{theorem}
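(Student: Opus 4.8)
The plan is to prove convergence in two stages matching the two phases of Algorithm~\ref{alg:tr_solver}, after first pinning down where the global minimizer sits. First I would record the trust-region optimality conditions: a feasible $\vz^*$ is the global minimizer of $\min_{\|\vz\|\le 1} f(\vz)$ iff there is a multiplier $\lambda \ge 0$ with $(\mH + \lambda\mI)\vz^* = -\vg$, $\mH + \lambda\mI \succeq \vzero$, and $\lambda(\|\vz^*\| - 1)=0$. Since in our setting $\mH = -\mK^\intercal\mK \preceq \vzero$ has $\lambda_{\min}(\mH) < 0$ whenever $\mK\neq\vzero$, the condition $\mH + \lambda\mI \succeq \vzero$ forces $\lambda \ge -\lambda_{\min}(\mH) > 0$, and complementary slackness then gives $\|\vz^*\| = 1$. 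Thus the global minimizer lies on the unit sphere, which is exactly the set Phase II optimizes over; this is what makes the two-phase split legitimate.

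Then I would analyze Phase I. The update is plain gradient descent $\vz_{t+1} = \vz_t - \eta\nabla f(\vz_t)$ with $\nabla f(\vz) = \mH\vz + \vg$, and with $\eta < 1/\beta = 1/\|\mH\|_{\mathrm{op}}$ the descent lemma gives $f(\vz_{t+1}) \le f(\vz_t)$ monotonically. A short computation shows the initialization bound does its job: $0 < \alpha < \|\vg\|^3/|\vg^\intercal\mH\vg|$ forces both $f(\vz_0) < 0 = f(\vzero)$ and $\vz_0^\intercal\nabla f(\vz_0) < 0$ (so the first step moves outward). Now suppose for contradiction that the iterates never leave the ball. Being bounded, the sequence satisfies $\nabla f(\vz_t)\to\vzero$, so every limit point is an interior critical point $\vz_c$ solving $\mH\vz_c = -\vg$; but for $\mH \preceq \vzero$ such a point has $f(\vz_c) = -\tfrac12\vz_c^\intercal\mH\vz_c \ge 0$, contradicting $f(\vz_t)\le f(\vz_0) < 0$. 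Hence the iterates cross $\|\vz\|=1$ after finitely many steps, ending Phase I. When $\mH$ also has positive eigenvalues, the interior critical point is instead a strict saddle of the gradient-descent map—its only contracting directions are the positive-eigenvalue eigenvectors, since $\eta\lambda_i < 1$—so convergence to it would require starting on a measure-zero stable manifold, which the chosen $\vz_0$ avoids.

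Finally I would treat Phase II, which runs projected gradient descent on the sphere: $(\mI - \vz\vz^\intercal)$ projects onto the tangent space and the renormalization keeps $\|\vz_{t'}\|=1$, so with the line-searched step this is Riemannian gradient descent of $f$ restricted to the compact sphere. Thus $f(\vz_{t'})$ decreases and the iterates approach the stationary set where $(\mI - \vz\vz^\intercal)(\mH\vz+\vg)=\vzero$, i.e. $(\mH + \lambda\mI)\vz = -\vg$ with $\lambda = -\vz^\intercal(\mH\vz+\vg)$—precisely the KKT points from the first step. Writing $f$ on the sphere in the eigenbasis of $\mH$ shows the unique stationary point with $\lambda \ge -\lambda_{\min}(\mH)$ is the global minimizer, while every other stationary point carries a direction of negative curvature along the sphere and is a strict saddle. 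I would close by arguing that the warm start inherited from Phase I, together with strict-saddle avoidance of Riemannian gradient descent under line search, drives $\vz_{t'}$ to that unique minimizer, which coincides with the global minimizer identified in the first step.

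The step I expect to be the main obstacle is ruling out convergence to spurious stationary points—the strict-saddle analysis on the sphere in Phase II and, for genuinely indefinite $\mH$, the stable-manifold argument in Phase I. The clean value-based contradiction settles the negative-semidefinite case that actually arises from $\mH = -\mK^\intercal\mK$, but extending it to arbitrary indefinite $\mH$ and certifying that the Phase~I exit point lands in the basin of the global minimizer rather than of a saddle is the delicate part, and is where the precise form of the initialization and of the projected-gradient update must be exploited.
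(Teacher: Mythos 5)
Your skeleton (KKT characterization placing the minimizer on the sphere, a descent/exit argument for Phase I, and Riemannian descent for Phase II) matches the paper's overall structure, and your function-value contradiction for Phase I is a clean way to force the iterates out of the ball when $\mH\preceq\vzero$. But the step you yourself flag as the obstacle --- ruling out convergence to spurious stationary points --- is a genuine gap, and the tools you propose for it do not close it. Strict-saddle avoidance and stable-manifold arguments guarantee avoidance only for initializations outside a measure-zero set; here the initialization is the \emph{specific} deterministic point $\vz_0=-\alpha\vg/\|\vg\|$, so ``measure zero, hence avoided'' proves nothing, and the standard saddle-avoidance results for gradient descent additionally assume a fixed step size, whereas Phase~II uses a line search. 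Since the sphere carries up to $2n$ stationary points of $f$ (one for each root of $\|(\mH+\lambda\mI_d)^{-1}\vg\|=1$), of which only the one with $\lambda\ge-\lambda_1$ is the global minimizer, your plan leaves the main claim unproved.

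The paper's proof supplies exactly the ingredient you are missing, and it is elementary rather than measure-theoretic. Working in the eigenbasis of $\mH$, it shows the coordinate-wise sign condition $\vz_t^{(i)}\vg^{(i)}\le 0$ is an \emph{invariant} of the iteration: the unprojected step gives $\vz_{t+1}^{(i)}=(1-\eta\lambda_i)\vz_t^{(i)}-\eta\vg^{(i)}$, so $\vz_{t+1}^{(i)}\vg^{(i)}=(1-\eta\lambda_i)\vz_t^{(i)}\vg^{(i)}-\eta\vg^{(i)2}\le 0$ whenever $\eta<1/\beta$, and the projection onto the ball only rescales by a positive factor; the chosen $\vz_0$ satisfies the invariant at $t=0$. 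A separate KKT lemma (Lemma~\ref{le:glob-non-loc}) shows that \emph{any} stationary point $\vs$ with $\vs^{(1)}\vg^{(1)}\le 0$ must be the global minimizer, because a non-global KKT point forces $(\lambda_1+\lambda)<0$ and hence $\vs^{(1)}\vg^{(1)}=-(\lambda_1+\lambda)\vs^{(1)2}>0$. The invariant therefore deterministically excludes every spurious stationary point as a possible limit, with no basin-of-attraction or saddle-avoidance analysis needed. Two further points you should absorb: both arguments require the ``easy case'' $\vg^{(1)}\ne 0$ (otherwise the iterates never acquire a $\vv_1$-component and the global minimizer can be unreachable), an assumption the main-text statement omits but the appendix makes explicit; and your value-based Phase~I contradiction covers only $\mH\preceq\vzero$, while the algorithm and theorem are stated for general indefinite $\mH$, for which the paper instead proves monotone growth of $\|\vz_t\|$ via the same sign invariant and bounds the Phase~I length explicitly.
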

\begin{lemma}[Finite phase I]\label{le:bound_t}
Since $\mH$ is indefinite, $\lambda_1=\lambda_{\min}(\mH)<0$, and $\vv_1$ is the corresponding eigenvector. Denote $\va^{(1)}=\va^\intercal \vv_1$ is the projection of any $\va$ onto $\vv_1$, let $T_1$ be number of iterations in phase I of Algorithm~\ref{alg:tr_solver}, then:
\begin{equation}\label{eq:bound-t}
T_1\le \log(1-\eta\lambda_1)^{-1}\Big[\log\big(\frac{1}{\eta|\vg^{(1)}|}-\frac{1}{\eta\lambda_1}\big)
-\log\big(\frac{-\vz_0^{(1)}}{\eta \vg^{(1)}}-\frac{1}{\eta\lambda_1}\big)\Big].
\end{equation}
\end{lemma}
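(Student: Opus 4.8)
The plan is to track how Algorithm~\ref{alg:tr_solver} behaves along the single eigendirection $\vv_1$ of $\mH$ attached to the negative eigenvalue $\lambda_1 = \lambda_{\min}(\mH) < 0$, and to show that the coordinate $\vz_t^{(1)} = \vz_t^\intercal \vv_1$ blows up geometrically, which forces the iterate to leave the unit ball after finitely many steps. First I would project the Phase~I update $\vz_{t+1} = \vz_t - \eta(\mH\vz_t + \vg)$ onto $\vv_1$. Using $\mH\vv_1 = \lambda_1\vv_1$ and symmetry of $\mH$, this decouples into the scalar linear recurrence
\begin{equation}
    \vz_{t+1}^{(1)} = (1 - \eta\lambda_1)\,\vz_t^{(1)} - \eta\,\vg^{(1)}.
\end{equation}
Setting $\mu := 1 - \eta\lambda_1$, the assumptions $\eta > 0$ and $\lambda_1 < 0$ give $\mu > 1$, so the recurrence is \emph{expanding} rather than contracting; this is precisely the mechanism that guarantees escape from the ball.

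Next I would solve the recurrence in closed form. Its unique fixed point is $\vz^\ast = -\vg^{(1)}/\lambda_1$, and subtracting it yields $\vz_t^{(1)} - \vz^\ast = \mu^t(\vz_0^{(1)} - \vz^\ast)$, i.e.
\begin{equation}
    \vz_t^{(1)} = \vz^\ast + \mu^t\big(\vz_0^{(1)} - \vz^\ast\big).
\end{equation}
Because the initialization $\vz_0 = -\tfrac12\,\vg/\|\vg\|$ makes $\vz_0^{(1)}$ have the opposite sign of $\vg^{(1)}$, while $\vz^\ast$ has the same sign as $\vg^{(1)}$, the gap $\vz_0^{(1)} - \vz^\ast$ is nonzero (provided $\vg^{(1)}\neq 0$, a genericity condition I would state explicitly). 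Since $\mu > 1$, $|\vz_t^{(1)}|$ thus grows monotonically to infinity, with $\vz_t^{(1)}$ moving away from $\vz^\ast$ in the direction of $-\sign(\vg^{(1)})$, and in particular it eventually attains magnitude one.

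Finally I would convert geometric growth of the coordinate into a bound on $T_1$. The key inequality is $\|\vz_t\| \ge |\vz_t^{(1)}|$: once $|\vz_t^{(1)}| \ge 1$ we have $\|\vz_t\|\ge 1$, so the Phase~I loop condition $\|\vz_t\| < 1$ must already have failed. Hence $T_1$ is bounded by the first time the scalar iterate reaches the boundary value $s = -\sign(\vg^{(1)})$ (the side it is heading toward). Solving $\vz_{t}^{(1)} = s$ from the closed form gives $\mu^{t} = (s-\vz^\ast)/(\vz_0^{(1)}-\vz^\ast)$; substituting $\vz^\ast = -\vg^{(1)}/\lambda_1$ and $s = -\sign(\vg^{(1)})$, then multiplying numerator and denominator by $-1/(\eta\vg^{(1)})$ and using $\sign(\vg^{(1)})/\vg^{(1)} = 1/|\vg^{(1)}|$, rewrites this ratio as exactly
\begin{equation}
    \mu^{t} = \frac{\tfrac{1}{\eta|\vg^{(1)}|} - \tfrac{1}{\eta\lambda_1}}{\tfrac{-\vz_0^{(1)}}{\eta\vg^{(1)}} - \tfrac{1}{\eta\lambda_1}},
\end{equation}
and taking logarithms and dividing by $\log\mu = \log(1-\eta\lambda_1) > 0$ produces the claimed bound~\eqref{eq:bound-t}. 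The part demanding the most care is bookkeeping: verifying the signs of $\vz_0^{(1)}$, $\vz^\ast$, and $s$ so that both logarithms have positive arguments and the correct boundary $\pm 1$ is selected, and noting the (benign) gap between the continuous crossing time and the integer count $T_1$. The conceptual content --- that the lone negative eigendirection drives geometric escape --- is straightforward once the recurrence is isolated.
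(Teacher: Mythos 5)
Your proof is correct and follows essentially the same route as the paper's: project the Phase~I update onto $\vv_1$, solve the resulting affine scalar recurrence in closed form, and use $|\vz_t^{(1)}|\le\|\vz_t\|$ to convert the geometric escape of that single coordinate into the stated bound on $T_1$. The paper merely phrases the closed form via the normalized variable $\vw_t^{(1)}=\vz_t^{(1)}/(-\eta\vg^{(1)})$ rather than by subtracting the fixed point $-\vg^{(1)}/\lambda_1$ --- an equivalent reparameterization --- and your sign bookkeeping (which boundary $\pm 1$ the coordinate heads toward, and why both logarithm arguments are positive) is if anything more explicit than the paper's one-line derivation.
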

\begin{theorem}[Asymptotic linear rate]\label{th:linear-converge}
Let $\{\vz_t\}$ be an infinite sequence of iterates generated by Algorithm~\ref{alg:tr_solver}, suppose it converges to $\vz^*$ (guaranteed by Theorem~\ref{th:converge}), let $\lambda_{\mH,\min}$ and $\lambda_{\mH,\max}$ be the smallest and largest eigenvalues of $\mH$. Assume that $\vz^*$ is a local minimizer then $\lambda_{\mH,\min}>0$ and given $r$ in the interval $(r_*, 1)$ with $r_*=1-\min\big(2\sigma\bar{\alpha}\lambda_{\mH,\min}, 4\sigma(1-\sigma)\beta\frac{\lambda_{\mH,\min}}{\lambda_{\mH,\max}}\big)$, $\bar{\alpha}$, $\sigma$ are line search parameters. There exists an integer $K$ such that:
$$
f(\vz_{t+1})-f(\vz^*)\le r\big(f(\vz_t)-f(\vz^*)\big)
$$
for all $t\ge K$.
\end{theorem}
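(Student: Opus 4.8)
The plan is to treat this as a \emph{local} convergence analysis: once the iterates enter a small neighborhood of the limit $\vz^*$ (which lies on the sphere $\|\vz\|=1$, since $\mH$ is indefinite and Phase~II operates on the boundary), the Phase~II update is exactly a Riemannian gradient step on the unit sphere $S^{d-1}$ equipped with a backtracking line search, and near a nondegenerate local minimizer the pulled-back objective is strongly convex. I would therefore reduce the statement to the classical linear rate of gradient descent with Armijo line search for a strongly convex, smooth function, with the strong-convexity and smoothness moduli identified as the extreme eigenvalues of the \emph{reduced} Hessian on the tangent space. This reduction also clarifies the notation: the quantities $\lambda_{\mH,\min}$ and $\lambda_{\mH,\max}$ cannot be the eigenvalues of the indefinite $\mH$ (that would contradict $\lambda_{\mH,\min}>0$); they must be the smallest and largest eigenvalues of the projected Hessian $\mP(\mH-\mu\mI_d)\mP$ restricted to the tangent space at $\vz^*$, where $\mP=\mI_d-\vz^*(\vz^*)^\intercal$ and $\mu$ is the Lagrange multiplier introduced below.

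First I would record the first-order optimality condition. At the limit the Riemannian gradient vanishes, $\mP(\mH\vz^*+\vg)=\vzero$, which is equivalent to the Lagrange condition $\mH\vz^*+\vg=\mu\vz^*$ with $\mu=(\vz^*)^\intercal(\mH\vz^*+\vg)$. Differentiating along curves on the sphere then gives the Riemannian Hessian $\mathrm{Hess}\,f(\vz^*)=\mP(\mH-\mu\mI_d)\mP$ acting on the tangent space $\{\vv:\langle\vv,\vz^*\rangle=0\}$; the term $-\mu\mI_d$ is exactly the curvature correction coming from the spherical constraint. Since $\vz^*$ is assumed to be a local minimizer, the second-order necessary condition forces this operator to be positive semidefinite on the tangent space, and nondegeneracy upgrades it to positive definite, establishing $\lambda_{\mH,\min}>0$ and giving meaning to the conditioning ratio $\lambda_{\mH,\min}/\lambda_{\mH,\max}$.

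Next I would pass to tangent coordinates. Parametrizing a neighborhood of $\vz^*$ on the sphere through the normalization retraction that the algorithm already uses, the pullback of $f$ is smooth, and by continuity of the Riemannian Hessian together with its positive definiteness at $\vz^*$ it is $\lambda_{\mH,\min}$-strongly convex and $\lambda_{\mH,\max}$-smooth on a small enough ball around $\vz^*$. On this ball the step that normalizes $\vz_{t'}-\alpha_{t'}(\mI_d-\vz_{t'}\vz_{t'}^\intercal)(\mH\vz_{t'}+\vg)$ is precisely a retraction-based gradient descent step with $\alpha_{t'}$ chosen by backtracking. I would then invoke the standard convergence theorem for gradient descent with Armijo line search on a strongly convex, smooth function: the per-step contraction factor is $1-\min(2\sigma\bar\alpha\lambda_{\mH,\min},\,4\sigma(1-\sigma)\beta\lambda_{\mH,\min}/\lambda_{\mH,\max})$, where the first term covers the regime in which the initial trial step $\bar\alpha$ already passes the sufficient-decrease test, and the second covers the regime in which backtracking (with shrink factor $\beta$) must reduce the step until it is proportional to $1/\lambda_{\mH,\max}$. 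Taking $K$ large enough that $\vz_t$ lies in this ball for all $t\ge K$ yields the claimed bound for any $r\in(r_*,1)$.

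The main obstacle is the transfer of the Euclidean line-search theory to the sphere. One must verify that the sufficient-decrease inequality used by the line search is compatible with the Riemannian metric, and that the second-order Taylor expansion of $f$ along the retraction curve agrees with $\mP(\mH-\mu\mI_d)\mP$ up to higher-order terms, so that the effective local strong-convexity and smoothness moduli are genuinely $\lambda_{\mH,\min}$ and $\lambda_{\mH,\max}$ rather than curvature-corrupted surrogates. Because $\vz_t\to\vz^*$, one can shrink the neighborhood until these curvature corrections are dominated by the quadratic part, at which point the classical estimates apply verbatim; making this domination quantitative is the delicate part of the argument.
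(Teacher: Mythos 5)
Your overall route is the same as the paper's: recognize Phase II as retraction-based Riemannian gradient descent with Armijo backtracking on the unit sphere, compute the Riemannian Hessian at $\vz^*$, and invoke the standard local linear-rate theorem (the paper simply cites Theorems 4.3.1 and 4.5.6 of Absil et al.\ for the rate, with exactly the contraction factor you describe). Your identification of $\lambda_{\mH,\min},\lambda_{\mH,\max}$ as eigenvalues of the projected Hessian $\mP(\mH-\mu\mI_d)\mP$ rather than of the indefinite $\mH$ also matches the paper's computation, which yields $\langle\mathrm{Hess}\,f(\vz^*)[\xi],\xi\rangle=\xi^\intercal\mH\xi+\lambda$ with $\lambda=-\vz^{*\intercal}\mH\vz^*-\vg^\intercal\vz^*$ the KKT multiplier (equal to your $-\mu$).

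The one place you gloss over the only substantive verification in the paper's proof is the strict inequality $\lambda_{\mH,\min}>0$: you obtain positive semidefiniteness from second-order necessity and then say ``nondegeneracy upgrades it to positive definite,'' but nondegeneracy is not a hypothesis of the theorem and has to be established. The paper derives it from the easy-case assumption $\vg^{(1)}=\vg^\intercal\vv_1\ne 0$: projecting the stationarity condition onto $\vv_1$ gives $(\lambda_1+\lambda)\vz^{*(1)}+\vg^{(1)}=0$, so $\lambda_1+\lambda\ne 0$, which together with $\mH+\lambda\mI_d\succeq 0$ forces $\lambda+\lambda_1>0$ and hence $\lambda_{\mH,\min}\ge\lambda+\lambda_1>0$ (and $\lambda_{\mH,\max}\le\lambda+\lambda_n$). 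Without this step, or an equivalent argument, the claim $\lambda_{\mH,\min}>0$ and the conditioning ratio appearing in $r_*$ are not justified, so you should make it explicit rather than fold it into an unstated nondegeneracy assumption.
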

\subsection{\label{sec:classification}Classification task}
As we have mentioned, data poisoning attack to classification problem is more challenging, as we can only \emph{flip} an unnoticeable fraction of training labels. This is inherently a combinatorial optimization problem. For simplicity, we restrict the scope to binary classification so that $\vy_l\in\{-1, +1\}^{n_l}$, 
and the labels are perturbed as $\tilde{\vy}_l=\vy_l\odot\vdelta_y$, where $\odot$ denotes Hadamard product and $\vdelta_y=[\pm 1, \pm 1, \dots, \pm 1]$. For restricting the amount of perturbation, we replace the norm constraint in Eq.~\eqref{eq:estimate-label} with integer constraint $\sum_{i=1}^{n_l}\mathbb{I}_{\{\vdelta_y[i]=-1\}}\le c_{\max}$, where $c_{\max}$ is a user pre-defined constant. In summary, the final objective function has the following form
\begin{equation}
    \label{eq:obj-func-perturb_y-discrete}
        \min_{\vdelta_y\in\{+1,-1\}^{n_l}}\  -\frac{1}{2}\Big\|g\big(\mK(\vy_l\odot\vdelta_y)\big)-(\vy_u\text{ or }\hat{\vy}_u)\Big\|^2,\quad 
        \text{s.t.}\quad\sum\nolimits_{i=1}^{n_l}\mathbb{I}_{\{\vdelta_y[i]=-1\}}\le c_{\max},
\end{equation}
where we define $\mK=(\mD_{uu}-\mS_{uu})^{-1}\mS_{ul}$ and $g(x)=\sign(x)$, so the objective function directly relates to error rate. Notice that the feasible set contains around $\sum_{k=0}^{c_{\max}}{n_l\choose k}$ solutions, making it almost impossible to do an exhaustive search. A simple alternative is greedy search: first initialize $\vdelta_y=[+1,+1,\dots,+1]$, then at each time we select index $i\in[n_l]$ and try flip $\vdelta_y[i]= +1\to -1$, such that the objective function~\eqref{eq:obj-func-perturb_y-discrete} decreases the most. Next, we set $\vdelta_y[i]=-1$. We repeat this process multiple times until the constraint in~\eqref{eq:obj-func-perturb_y-discrete} is met.

Doubtlessly, the greedy solver is myopic. The main reason is that the greedy method cannot explore other flipping actions that appear to be sub-optimal within the current context, despite that some sub-optimal actions might be better in the long run. Inspired by the bandit model, we can imagine this problem as a multi-arm bandit, with $n_l$ arms in total. And we apply a strategy similar to $\epsilon$-greedy: each time we assign a high probability to the best action but still leave non-zero probabilities to other ``actions''. The new strategy can be called \emph{probabilistic method}, specifically, we model each action $\vdelta_y=\pm 1$ as a Bernoulli distribution, the probability of ``flipping'' is $P[\vdelta_y=-1]=\valpha$. The new loss function is just an expectation over Bernoulli variables
\begin{equation}\label{eq:obj-func-randomness}
                \min_{\valpha}\  \bigg\{ \mathcal{L}(\valpha):=
            -\frac{1}{2}\mathop{\mathbb{E}}_{\vz\sim \mathcal{B}(\bm{1}, \valpha)}
            \left[
            \big\|g\big(\mK(\vy_l\odot\vz)\big)-(\vy_u\text{ or }\hat{\vy}_u)\big\|^2\right]
            +\frac{\lambda}{2}\cdot \|\valpha\|_2^2 
                \bigg\}.
\end{equation}
Here we replace the integer constraint in Eq.~\ref{eq:obj-func-perturb_y-discrete} with a regularizer $\frac{\lambda}{2}\|\valpha\|_2^2$, the original constraint is reached by selecting a proper $\lambda$. Once  problem~\eqref{eq:obj-func-randomness} is solved, we craft the actual perturbation $\vdelta_y$ by setting $\vdelta_y[i]=-1$ if $\valpha[i]$ is among the top-$c_{\max}$ largest elements.
\par
To solve Eq.~\eqref{eq:obj-func-randomness}, we need to find a good gradient estimator. Before that, we replace $g(x)=\sign(x)$ with $\tanh(x)$ to get a continuously differentiable objective. We borrow the idea of ``reparameterization trick''~\cite{figurnov2018implicit,tucker2017rebar} to approximate $\mathcal{B}(\bm{1}, \valpha)$ by a continuous random vector
\begin{equation}\label{eq:Gumbel-reparameterize}
    \vz\triangleq\vz(\valpha, \Delta_G)=\frac{2}{1+\exp\Big(\frac{1}{\tau}\big(\log\frac{\valpha}{1-\valpha}+\Delta_G\big)\Big)}-1
        \in (-1, 1),
\end{equation}
where $\Delta_G\sim\vg_1-\vg_2$ and $\vg_{1,2}\overset{\text{iid}}{\sim}\text{Gumbel}(0, 1)$ are two Gumbel distributions. $\tau$ is the temperature controlling the steepness of sigmoid function: as $\tau\to 0$, the sigmoid function point-wise converges to a stair function. Plugging \eqref{eq:Gumbel-reparameterize} into \eqref{eq:obj-func-randomness}, the new loss function becomes
\begin{equation}
        \mathcal{L}(\valpha):=-\frac{1}{2}\mathop{\mathbb{E}}_{\Delta_G} \left[\big\|g\big(\mK(\vy_l\odot\vz(\valpha,\Delta_G))\big)-(\vy_u\text{ or }\hat{\vy}_u)\big\|^2\right]
        +\frac{\lambda}{2}\cdot \|\valpha\|_2^2.
\end{equation}
Therefore, we can easily obtain an unbiased, low variance gradient estimator via Monte Carlo sampling from $\Delta_G=\vg_1-\vg_2$, specifically
\begin{equation}
    \label{eq:stochastic-gradient}
    \frac{\partial\mathcal{L}(\valpha)}{\partial \valpha}\approx -\frac{1}{2}\frac{\partial}{\partial\valpha}\big\|g\big(\mK(\vy_l\odot\vz(\valpha,\Delta_G))\big)-(\vy_u\text{ or }\hat{\vy}_u)\big\|^2+\lambda\valpha.
\end{equation}
Based on that, we can apply many stochastic optimization methods, including SGD and Adam~\cite{kingma2014adam}, to finalize the process. In the experimental section, we will compare the greedy search with our probabilistic approach on real data.

\section{Experiments}
In this section, we will show the effectiveness of our proposed data poisoning attack algorithms for regression and classification tasks on graph-based SSL.
\begin{wraptable}{r}{0.5\linewidth}
\label{tab:data}
\vspace{-10pt}
\caption{Dataset statistics. Here $n$ is the total number of samples, $d$ is the dimension of feature vector and $\gamma^*$ is the optimal $\gamma$ in validation. \texttt{mnist17} is created by extracting images for digits `1' and `7' from standard \texttt{mnist} dataset.}
\scalebox{0.9}{
    \begin{tabular}{ccccc}
    \toprule
    Name     & Task & $n$ & $d$ & $\gamma^*$ \\
    \midrule
    \texttt{cadata} & Regression & 8,000 & 8 & 1.0\\
    \texttt{E2006} & Regression & 19,227  & 150,360 & 1.0\\
    \texttt{mnist17} & Classification & 26,014  & 780 & 0.6 \\
    \texttt{rcv1} & Classification & 20,242 & 47,236 & 0.1\\
    \bottomrule
    \end{tabular}}
    \vspace{-40pt}
\end{wraptable}
\subsection{Experimental settings and baselines}
We conduct experiments on two regression and two binary classification datasets\footnote{Publicly available at \url{https://www.csie.ntu.edu.tw/~cjlin/libsvmtools/datasets/}}. The meta-information can be found in Table~\ref{tab:data}. We use a Gaussian kernel with width $\gamma$ to construct the graph. For each data, we randomly choose $n_l$ samples as the labeled set, and the rest are unlabeled. We normalize the feature vectors by  $x'\gets (x-\mu)/\sigma$, where $\mu$ is the sample mean, and $\sigma$ is the sample variance. For regression data, we also scale the output by $y'\gets (y-y_{\min})/(y_{\max}-y_{\min})$ so that $y'\in[0, 1]$. To evaluate the performance of label propagation models, for regression task we use RMSE metric defined as $\mathrm{RMSE}=\sqrt{\frac{1}{n_u}\sum_{i=1}^{n_u}(y_i-\hat{y}_i)^2}$, while for classification tasks we use error rate metric. For comparison with other methods, since \textbf{this is the first work on data poisoning attack to G-SSL}, we proposed several baselines according to graph centrality measures. The first baseline is random perturbation, where we randomly add Gaussian noise (for regression) or Bernoulli noise (for regression) to labels. The other two baselines based on graph centrality scores are more challenging, they are widely used to find the ``important'' nodes in the graph. Intuitively, we need to perturb ``important'' nodes to attack the model, and we decide the importance by node degree or PageRank. We explain the baselines with more details in the appendix.

\subsection{Effectiveness of data poisoning to G-SSL}
In this experiment, we consider the white-box setting where the attacker knows not only the ground truth labels $\vy_u$ but also the correct hyper-parameter $\gamma^*$. We thus apply our proposed label poisoning algorithms in Section~\ref{sec:regression} and~\ref{sec:classification} to attack regression and classification tasks, respectively. In particular, we apply $\ell_2$ constraint for perturbation $\vdelta_y$ in the regression task and use the greedy method in the classification task. The results are shown in Figure~\ref{fig:change_num_perturb},
\begin{figure}[h]
    \centering
    \includegraphics[width=\linewidth]{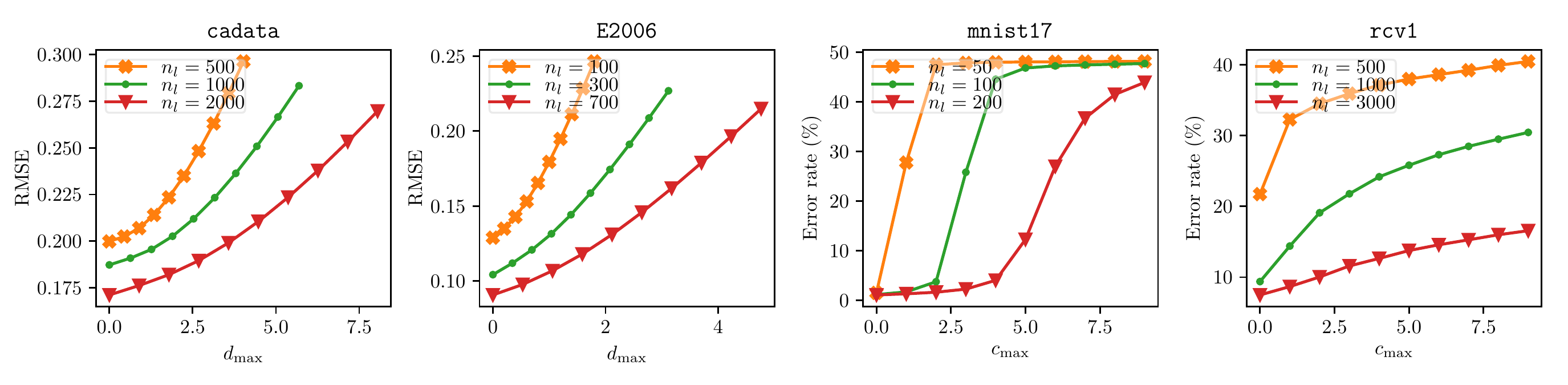}\\\vspace{-5pt}
    \includegraphics[width=\linewidth]{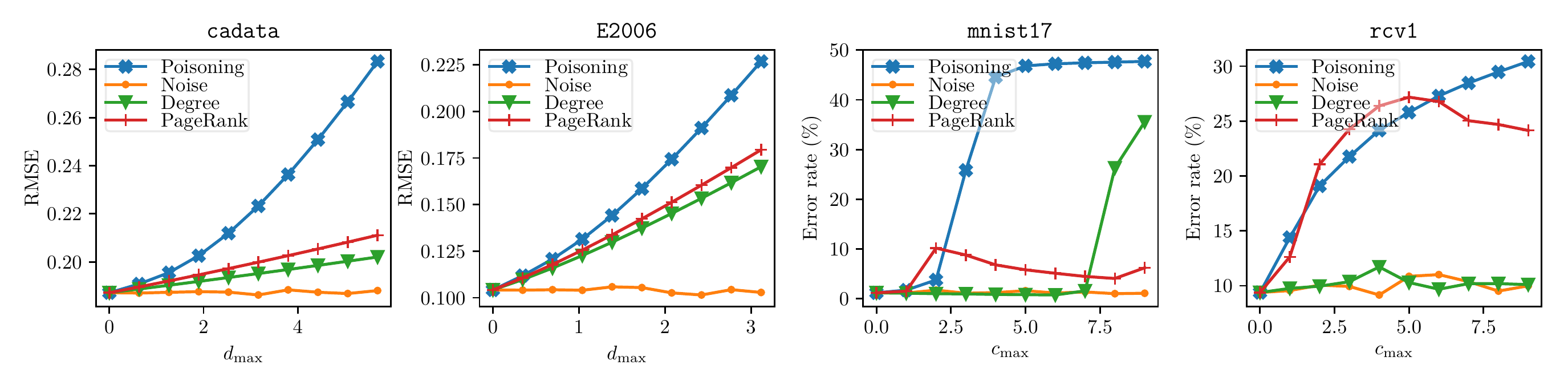}\vspace{-5pt}
    \caption{\textit{Top row:} testing the effectiveness of poisoning algorithms on four datasets shown in Table~\eqref{tab:data}. The left two datasets are regression tasks, and we report the RMSE measure. The right two datasets are classification tasks in which we report the error rate. For each dataset, we repeat the same attacking algorithm w.r.t. different $n_l$'s. \textit{Bottom row:} compare our poisoning algorithm with three baselines (random noise, degree-based attack, PageRank based attack). We follow our convention that $d_{\max}$ is the maximal $\ell_2$-norm distortion, and $c_{\max}$ is the maximal $\ell_0$-norm perturbation.}
    \label{fig:change_num_perturb}
\end{figure}
as we can see in this figure, for both regression and classification problems, small perturbations can lead to vast differences: for instance, on \texttt{cadata}, the RMSE increases from $0.2$ to $0.3$ when applied a carefully designed perturbation $\|\vdelta_y\|=3$ (this is very small compared with the norm of label $\|\vy_l\|\approx 37.36$); More surprisingly, on \texttt{mnist17}, the accuracy can drop from $98.46\%$ to $50\%$ by flipping just $3$ nodes. This phenomenon indicates that \textbf{current graph-based SSL, especially the label propagation method, can be very fragile to data poisoning attacks}. On the other hand, using different baselines (shown in Figure~\ref{fig:change_num_perturb}, bottom row), the accuracy does not decline much, this indicates that our proposed attack algorithms are more effective than centrality based algorithms.
\par
Moreover, the robustness of label propagation is strongly related to the number of labeled data $n_l$: for all datasets shown in Figure~\ref{fig:change_num_perturb}, we notice that the models with larger $n_l$ tend to be more resistant to poisoning attacks. This phenomenon arises because, during the learning process, the label information propagates from labeled nodes to unlabeled ones. Therefore even if a few nodes are ``contaminated'' during poisoning attacks, it is still possible to recover the label information from other labeled nodes. Hence this experiment can be regarded as another instance of ``no free lunch'' theory in adversarial learning~\cite{tsipras2018robustness}.

\subsection{\label{sec:unsupervised-attack-exp} Comparing poisoning with and without truth labels}
\begin{figure*}[h]
    \centering
    \includegraphics[width=0.97\linewidth]{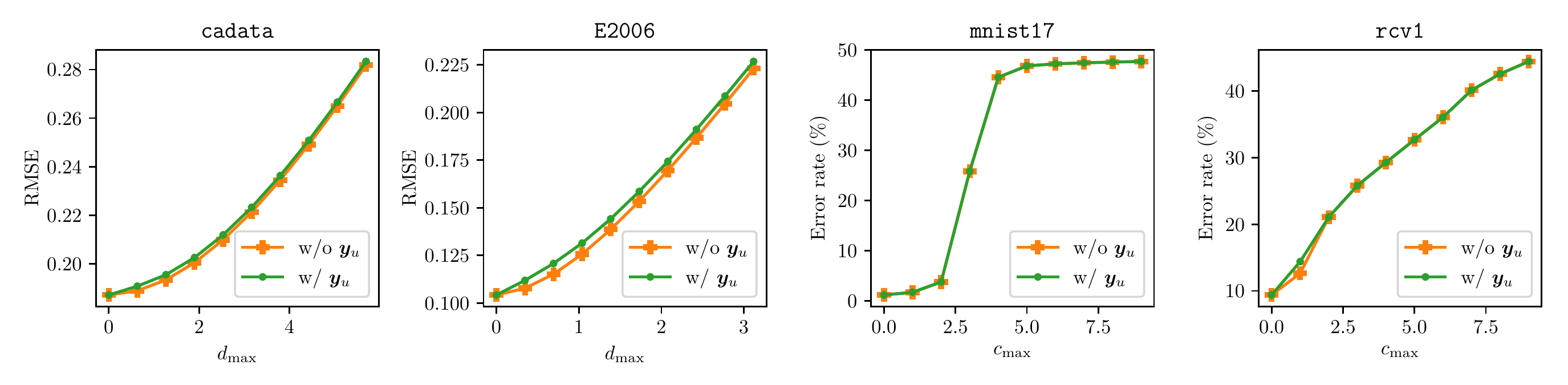}
    \caption{Comparing the effectiveness of label poisoning attack with and without knowing the ground truth labels of unlabeled nodes $\vy_u$. Interestingly, even if the attacker is using the estimated labels $\hat{\vy}_u$, the effectiveness of the poisoning attack does not degrade significantly.}
    \label{fig:compare_unsup}
\end{figure*}
We compare the effectiveness of poisoning attacks with and without ground truth labels $\vy_u$. Recall that if an attacker does not hold $\vy_u$, (s)he will need to replace it with the estimated values $\hat{\vy}_u$. Thus we expect a degradation of effectiveness due to the replacement of $\vy_u$, especially when $\hat{\vy}_u$ is not a good estimation of $\vy_u$. The result is shown in Figure~\ref{fig:compare_unsup}. Surprisingly, we did not observe such phenomenon: for regression tasks on \texttt{cadata} and \texttt{E2006}, two curves are closely aligned despite that attacks without ground truth labels $\vy_u$ are only slightly worse. For classification tasks on \texttt{mnist17} and \texttt{rcv1}, we cannot observe any difference, the choices of which nodes to flip are exactly the same (except the $c_{\max}=1$ case in \texttt{rcv1}). This experiment provides a valuable implication that hiding the ground truth labels cannot protect the SSL models, because the attackers can alternatively use the estimated ground truth $\hat{\vy}_u$.

\subsection{\label{sec:discrete_compare}Comparing greedy and probabilistic method}
\begin{figure}[h]
    \centering
    \includegraphics[width=0.7\linewidth]{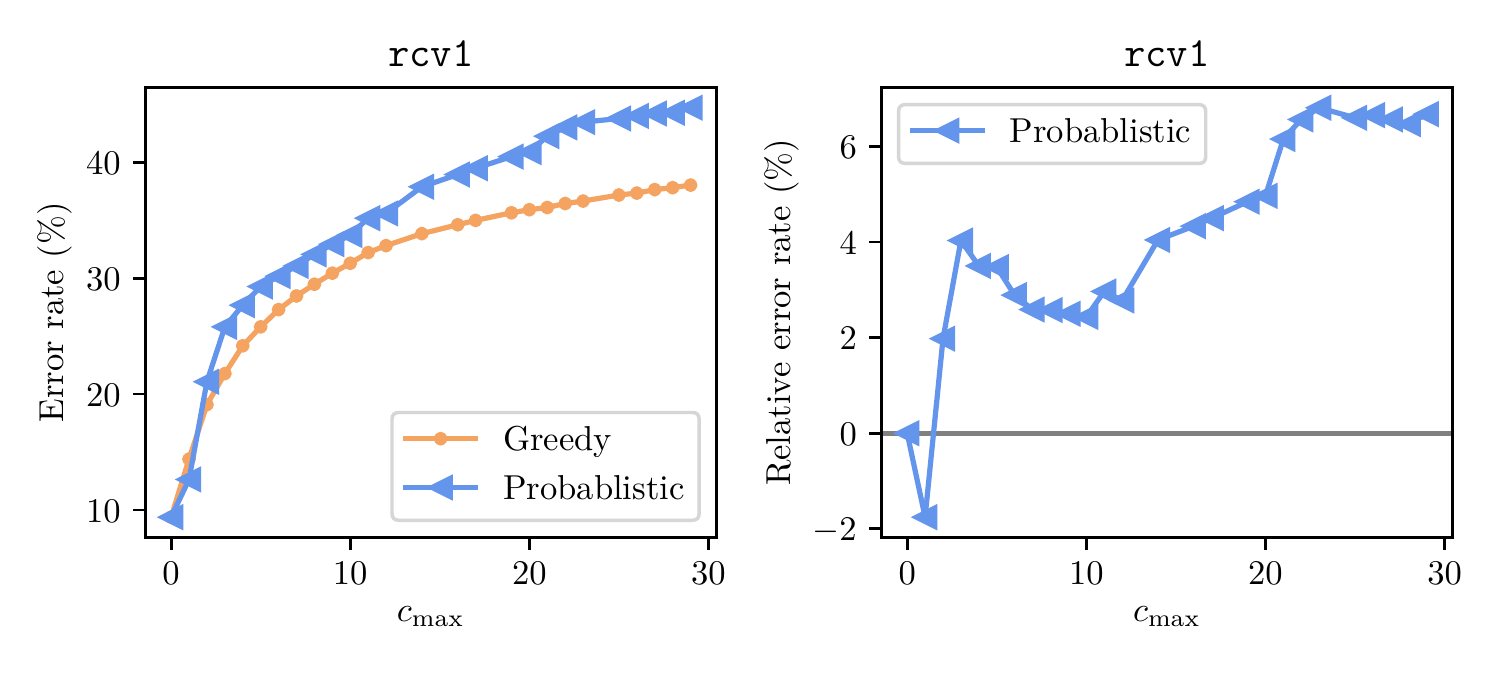}
    \caption{Comparing the relative performance of three approximate solvers to discrete optimization problem~\eqref{eq:obj-func-perturb_y-discrete}. For clarity, we also show the relative performance on the right (probabilistic $-$ greedy).}
    \label{fig:cmp_3methods}
\end{figure}
In this experiment, we compare the performance of three approximate solvers for problem~\eqref{eq:obj-func-perturb_y-discrete} in Section~\ref{sec:classification}, namely greedy and probabilistic methods. We choose \texttt{rcv1} data as oppose to \texttt{mnist17} data, because \texttt{rcv1} is much harder for poisoning algorithm: when $n_l=1000$, we need $c_{\max}\approx 30$ to make error rate $\approx 50\%$, whilst \texttt{mnist17} only takes $c_{\max}=5$. For hyperparameters, we set $c_{\max}=\{0,1,\dots,29\}$, $n_l=1000$, $\gamma^*=0.1$. The results are shown in Figure~\ref{fig:cmp_3methods}, we can see that for larger $c_{\max}$, greedy method can easily stuck into local optima and inferior than our probabilistic based algorithms.

\subsection{\label{sec:exp-imprecise-gamma}Sensitivity analysis of hyper-parameter} 
\begin{wrapfigure}{l}{0.45\textwidth}
  \begin{center}
    \includegraphics[width=\linewidth]{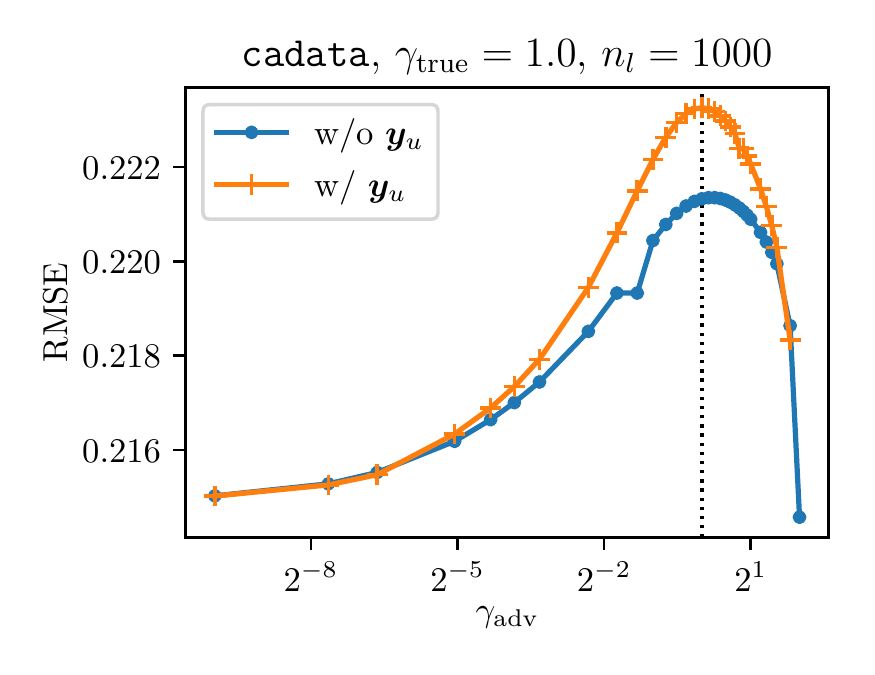}
    \caption{Experiment result on imperfect estimations of $\gamma^*$.}
    \label{fig:gamma-sensitivity}
  \end{center}
\end{wrapfigure}
Since we use the Gaussian kernel to construct the graph, there is an important hyper-parameter $\gamma$ (kernel width) that controls the structure of the graph defined in~\eqref{eq:label-prop-problem}, which is often chosen empirically by the victim through validation. Given the flexibility of $\gamma$, it is thus interesting to see how the effectiveness of the poisoning attack degrades with the attacker's imperfect estimation of $\gamma$. To this end, we suppose the victim runs the model at the optimal hyperparameter $\gamma=\gamma^*$, determined by validation, while the attacker has a very rough estimation $\gamma_{\mathrm{adv}}\approx\gamma^*$. We conduct this experiment on \texttt{cadata} when the attacker knows or does not know the ground truth labels $\vy_u$, the result is exhibited in Figure~\ref{fig:gamma-sensitivity}. 
It shows that when the adversary does not have exact information of $\gamma$, it will receive some penalties on the performance (in RMSE or error rate). However, it is entirely safe to choose a smaller $\gamma_{\mathrm{adv}}<\gamma_{\mathrm{truth}}$ because the performance decaying rate is pretty low. Take Figure~\ref{fig:gamma-sensitivity} for example, even though $\gamma_{\mathrm{adv}}=\frac{1}{8}\gamma_{\mathrm{truth}}$, the RMSE only drops from $0.223$ to $0.218$. On the other hand, if $\gamma_{\mathrm{adv}}$ is over large, the nodes become more isolated, and thus the perturbations are harder to propagate to neighbors.
\section{Conclusion}
We conduct the first comprehensive study of data poisoning to G-SSL algorithms, including label propagation and manifold regularization (in the appendix). The experimental results for regression and classification tasks exhibit the effectiveness of our proposed attack algorithms. In the future, it will be interesting to study poisoning attacks for deep semi-supervised learning models.

\section*{Acknowledgement}
Xuanqing Liu and Cho-Jui Hsieh acknowledge the support of NSF IIS-1719097, Intel faculty award, Google Cloud and Nvidia. Zhu acknowledges NSF 1545481, 1561512, 1623605, 1704117, 1836978 and the MADLab AF COE FA9550-18-1-0166.

\bibliographystyle{unsrt}
\bibliography{ref}
\newpage
\appendix

\section{Proof of Convergence}
We show that our gradient based nonconvex trust region solver is able to find a global minimum efficiently. First recall the objective function
\begin{equation}
    \label{eq:tr_problem}
    f(\vz^*)=\min_{\|\vz\|_2\le 1} f(\vz)=\frac{1}{2}\vz^\intercal \mH\vz+\vg^\intercal \vz, \quad \lambda_{\min}(\mH)<0.
\end{equation}
Suppose $\mH$ has decomposition $\sum_{i=1}^n\lambda_i\vv_i\vv_i^\intercal$ and rank $\lambda_1\le\dots\le\lambda_i\le 0\le\dots\le \lambda_n$. We only focus on ``easy case'': in which case $\vg^{(1)}=\vg^\intercal \vv_1\ne 0$ and $\vv_1$ is the corresponding eigenvector of $\lambda_1=\lambda_{\min}(\mH)$. In opposite, the hard case $\vg^{(1)}=0$ is hardly seen in practice due to rounding error, and to be safe we can also add a small Gaussian noise to $\vg$. To see the structure of solution, suppose the solution is $\vz^*$, then by KKT condition, we can get the condition of global optima
\begin{equation}
    \label{eq:KKT}
    \begin{aligned}
(\mH+\lambda\mI_d)\vz^*+\vg=0,\\
\lambda(1-\|\vz^*\|_2)=0,\\
\mH+\lambda\mI_d\succeq 0.
\end{aligned}
\end{equation}
By condition $\lambda_1<0$, further if $\vg^{(1)}\ne 0$, then $\lambda\ge -\lambda_1$ and $\|\vz^*\|=1$. Because $\vg^{(1)}\ne 0$, which implies $\lambda>-\lambda_1$. Immediately we know $\vz^*=-(\mH+\lambda\mI_d)^{-1}\vg$ and $\lambda$ is the solution of $\sum_{i=1}^n(\frac{\vg^{(i)}}{\lambda_i+\lambda})^2=1$.
\par
As a immediate application of \eqref{eq:KKT}, we can conclude the following lemma:
\begin{lemma}\label{le:glob-non-loc}
When $\vg^{(1)}\ne 0$ and $\lambda_1<0$, among all stationary points if $\vs^{(1)}\vg^{(1)}\le 0$ then $\vs=\vz^*$ is the global minimum.
\end{lemma}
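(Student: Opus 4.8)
The plan is to run everything through the KKT characterization that the paper has already set up in \eqref{eq:KKT}. A stationary point $\vs$ of the trust region problem \eqref{eq:tr_problem} comes equipped with a Lagrange multiplier $\mu\ge 0$ for the constraint $\|\vz\|_2\le 1$, satisfying the stationarity equation $(\mH+\mu\mI_d)\vs+\vg=\vzero$ together with complementarity $\mu(1-\|\vs\|_2)=0$. What is \emph{not} automatic for a mere stationary point is the curvature condition $\mH+\mu\mI_d\succeq 0$, which is the extra ingredient in \eqref{eq:KKT} that upgrades a KKT point to the global minimizer. So my goal is to show that the hypothesis $\vs^{(1)}\vg^{(1)}\le 0$ is exactly what forces this positive semidefiniteness, after which $(\vs,\mu)$ satisfies all three conditions of \eqref{eq:KKT} and must therefore equal the (unique, in the easy case $\vg^{(1)}\ne 0$) global optimum $\vz^*$.

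The key computation is to project the stationarity equation onto the bottom eigenvector $\vv_1$. Using $\mH\vv_1=\lambda_1\vv_1$ and taking the inner product of $(\mH+\mu\mI_d)\vs+\vg=\vzero$ with $\vv_1$ yields the scalar relation $(\lambda_1+\mu)\vs^{(1)}+\vg^{(1)}=0$. Since $\vg^{(1)}\ne 0$ by assumption, this equation cannot hold with $\lambda_1+\mu=0$, so $\lambda_1+\mu\ne 0$ and I may write $\vs^{(1)}=-\vg^{(1)}/(\lambda_1+\mu)$, whence $\vs^{(1)}\vg^{(1)}=-(\vg^{(1)})^2/(\lambda_1+\mu)$. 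Because $(\vg^{(1)})^2>0$, the sign of $\vs^{(1)}\vg^{(1)}$ is opposite to that of $\lambda_1+\mu$, so the condition $\vs^{(1)}\vg^{(1)}\le 0$ is equivalent to $\lambda_1+\mu>0$.

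To finish, note that $\lambda_1+\mu>0$ gives $\mu>-\lambda_1>0$, and since $\lambda_1$ is the smallest eigenvalue, $\lambda_i+\mu>0$ for every $i$, i.e. $\mH+\mu\mI_d\succ 0$ and in particular $\mH+\mu\mI_d\succeq 0$. Moreover $\mu>0$ combined with complementarity forces $\|\vs\|_2=1$; this is consistent and also explains why interior saddle points are excluded, since for them $\mu=0$ and $\vs^{(1)}\vg^{(1)}=-(\vg^{(1)})^2/\lambda_1>0$ violates the hypothesis. Thus $(\vs,\mu)$ verifies all of \eqref{eq:KKT}, so $\vs$ is a global minimizer, and by the uniqueness of the solution in the easy case $\vs=\vz^*$. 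The only delicate step is the sign bookkeeping in the projection that converts the geometric hypothesis $\vs^{(1)}\vg^{(1)}\le 0$ into the spectral certificate $\mH+\mu\mI_d\succeq 0$; once that is in hand, the conclusion is immediate from the KKT characterization already established.
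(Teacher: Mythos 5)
Your proof is correct and is essentially the same argument as the paper's: both hinge on projecting the stationarity equation onto $\vv_1$ to get $(\lambda_1+\mu)\vs^{(1)}+\vg^{(1)}=0$ and then reading off the sign of $\lambda_1+\mu$ from $\vs^{(1)}\vg^{(1)}$, relying on the sufficiency of the KKT system \eqref{eq:KKT} for global optimality. The only difference is presentational --- the paper runs the sign argument as a proof by contradiction (assume $\lambda_1+\lambda<0$ and derive $\vs^{(1)}\vg^{(1)}>0$), whereas you argue directly that the hypothesis forces $\mH+\mu\mI_d\succ 0$.
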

\begin{proof}
We proof by contradiction. Suppose $\vs$ is a stationary point and $\vs^{(1)}\vg^{(1)}\le 0$, according to \eqref{eq:KKT} if $\vs$ is not a global minimum then the third condition in \eqref{eq:KKT} should be violated, implying that $\lambda_1+\lambda<0$. Furthermore, for stationary point $\vs$, we know the gradient of Lagrangian is zero: $(\mH+\lambda\mI_d)\vz^*+\vg=0$. Projecting this equation onto $\vv_1$ we get 
\begin{equation}\label{eq:project-gradient-v1}
    (\lambda_1+\lambda)\vs^{(1)}+\vg^{(1)}=0.
\end{equation}
By condition $\vg^{(1)}\ne 0$ we know $\vs^{(1)}\ne 0$; multiply both sides of Eq.~\ref{eq:project-gradient-v1} by $\vs^{(1)}$ we get $\vs^{(1)}\vg^{(1)}>0$, which is in contradiction to $\vs^{(1)}\vg^{(1)}\le 0$.
\end{proof}
We now consider the projected gradient descent update rule $\vz_{t+1}=\text{Prox}_{\|\cdot\|_2}(\vz_t-\eta\nabla f(\vz_t))$, with following assumptions:
\begin{assumption}{(Bounded step size)}\label{as:step_size}
Step size $\eta<1/\beta$, where $\beta=\|\mH\|_{\mathrm{op}}$.
\end{assumption}
\begin{assumption}{(Initialize)}\label{as:init}
$\vz_0=-\alpha \frac{\vg}{\|\vg\|}$, $0<\alpha<\min\big(1, \frac{\|\vg\|^3}{|\vg^\intercal\mH\vg|}\big)$.
\end{assumption}

Under these assumptions, we next show proximal gradient descent converges to global minimum
\begin{theorem}\label{th:converge}
Under proximal gradient descent update: $\vz_{t+1}=\text{Prox}_{\|\cdot\|_2}\big(\vz_t-\eta\nabla f(\vz_t)\big)$, and Assumption \ref{as:step_size} if $\vz_t^{(i)}\vg^{(i)}\le 0$ then $\vz_{t+1}^{(i)}\vg^{(i)}\le 0$. Combining with Assumption \ref{as:init} and Lemma \ref{le:glob-non-loc}, if $\lambda_1<0$, $\vg^{(1)}\ne 0$ then $\vz_t$ converges to a global minimum $\vz^*$.
\end{theorem}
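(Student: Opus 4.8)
The plan is to work in the eigenbasis $\{\vv_i\}$ of $\mH$ and to track, direction by direction, the sign of $\vz_t^{(i)}\vg^{(i)}$, where $\vz_t^{(i)}=\vz_t^\intercal\vv_i$ and $\vg^{(i)}=\vg^\intercal\vv_i$. First I would note that here $\text{Prox}_{\|\cdot\|_2}$ is Euclidean projection onto the unit ball and that $\nabla f(\vz)=\mH\vz+\vg$, so a single step is $\vz_{t+1}=\vu_t/\max(1,\|\vu_t\|)$ with $\vu_t:=\vz_t-\eta(\mH\vz_t+\vg)$. Projecting the intermediate iterate onto $\vv_i$ gives the decoupled recursion $\vu_t^{(i)}=(1-\eta\lambda_i)\vz_t^{(i)}-\eta\vg^{(i)}$, which is the object I would analyze.

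The central step is the sign-invariance asserted in the theorem. Using Assumption~\ref{as:step_size}, namely $\eta<1/\beta$ with $\beta=\|\mH\|_{\mathrm{op}}=\max_i|\lambda_i|$, I get $1-\eta\lambda_i>0$ for every $i$. Hence, whenever $\vz_t^{(i)}\vg^{(i)}\le 0$,
\[
\vu_t^{(i)}\vg^{(i)}=(1-\eta\lambda_i)\,\vz_t^{(i)}\vg^{(i)}-\eta(\vg^{(i)})^2\le 0,
\]
because the first summand is a positive multiple of a nonpositive quantity and the second is nonpositive. The projection $\vu_t\mapsto\vu_t/\max(1,\|\vu_t\|)$ scales every coordinate by the common positive factor $1/\max(1,\|\vu_t\|)$, so $\vz_{t+1}^{(i)}\vg^{(i)}$ has the same sign as $\vu_t^{(i)}\vg^{(i)}$ and is therefore $\le 0$. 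For the base case, Assumption~\ref{as:init} gives $\vz_0^{(i)}\vg^{(i)}=-\alpha(\vg^{(i)})^2/\|\vg\|\le 0$ for all $i$; induction then yields $\vz_t^{(i)}\vg^{(i)}\le 0$ for all $t$ and all $i$, in particular for $i=1$.

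To conclude, I would combine this invariance with the standard convergence theory of proximal gradient descent. Since $\nabla f$ is $\beta$-Lipschitz and $\eta<1/\beta$, the sufficient-decrease inequality $f(\vz_{t+1})\le f(\vz_t)-(\tfrac{1}{2\eta}-\tfrac{\beta}{2})\|\vz_{t+1}-\vz_t\|^2$ holds, and since $f$ is continuous on the compact ball $\{\|\vz\|\le 1\}$ it is bounded below there; summing the decrease forces $\|\vz_{t+1}-\vz_t\|\to 0$, so every accumulation point is a fixed point of the proximal-gradient map, i.e.\ a KKT point satisfying~\eqref{eq:KKT}. By the invariance, any accumulation point $\vs$ satisfies $\vs^{(1)}\vg^{(1)}\le 0$, and Lemma~\ref{le:glob-non-loc} then identifies it as the global minimizer $\vz^*$ (unique in the easy case $\vg^{(1)}\ne 0$, since the secular equation $\sum_i(\vg^{(i)}/(\lambda_i+\lambda))^2=1$ has a single root on $(-\lambda_1,\infty)$). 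A bounded sequence whose accumulation points all coincide converges, so $\vz_t\to\vz^*$.

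The sign-invariance computation itself is short; the main obstacle I anticipate is the last step, upgrading the merely subsequential convergence that non-convex proximal gradient guarantees into convergence of the whole sequence to the \emph{global} optimum. This rests on two points that deserve care: verifying that each accumulation point is genuinely stationary for the constrained problem (the fixed-point characterization of the prox-gradient map, valid precisely because $\eta<1/\beta$), and using the preserved sign constraint together with uniqueness of the easy-case minimizer to rule out any stray accumulation point. Keeping the inequality $\vz^{(i)}\vg^{(i)}\le 0$ intact through both the linear gradient step and the nonlinear projection, simultaneously in every eigendirection, is the one place where the step-size and initialization assumptions are doing real work.
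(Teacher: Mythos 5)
Your sign-invariance argument is exactly the paper's: project the update onto the eigenbasis of $\mH$, use $1-\eta\lambda_i>0$ from Assumption~\ref{as:step_size} to show $(1-\eta\lambda_i)\vz_t^{(i)}\vg^{(i)}-\eta(\vg^{(i)})^2\le 0$, note that the projection is a positive rescaling and so preserves signs, and induct from the base case supplied by Assumption~\ref{as:init}; this is the entirety of the paper's displayed proof. Where you genuinely diverge is in the second half. The paper's proof of this theorem simply says ``we complete the proof by combining it with Lemma~\ref{le:glob-non-loc}'' and never establishes within this proof that the iterates actually accumulate at a stationary point --- that burden is carried implicitly by the later results (monotone growth of $\|\vz_t\|$, the finite-phase-I bound, and the Riemannian line-search convergence theorems cited from Absil et al.). You instead supply a self-contained argument via the standard proximal-gradient sufficient-decrease inequality, $\|\vz_{t+1}-\vz_t\|\to 0$, and the fixed-point characterization of the prox-gradient map, then use uniqueness of the easy-case minimizer to upgrade subsequential to full convergence. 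This is cleaner and closes a real gap in the paper's exposition; the only nitpick is that a fixed point of the prox-gradient map certifies only the first two (first-order) conditions in \eqref{eq:KKT}, not the semidefiniteness condition --- which is fine here, since Lemma~\ref{le:glob-non-loc} only requires first-order stationarity plus the sign condition, but you should say so rather than call the accumulation point ``a KKT point satisfying \eqref{eq:KKT}'' wholesale.
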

\begin{proof}
Notice the projection onto sphere will not change the sign of $\vz_{t+1}^{(i)}$, so:
$$
\sign\big(\vz_{t+1}^{(i)}\vg^{(i)}\big)=\sign\big((1-\eta \lambda_i)\vz_t^{(i)}\vg^{(i)}-\eta \vg^{(i)2}\big)
$$
$\eta_t<1/|\lambda_n|$ ensures $1-\eta\lambda_i>0$ for all $i\in [n]$. From Assumption~\ref{as:init} we know $\vz_0^{(i)}\vg^{(i)}=-\alpha\frac{\vg^{(i)2}}{\|\vg\|}\le 0$, so $\vz_t^{(i)}\vg^{(i)}\le 0$ for all $t$. We complete the proof by combining it with Lemma~\ref{le:glob-non-loc}.
\end{proof}

By careful analysis, we can actually divide the convergence process into two stages. In the first stage, the iterates $\{\vz_t\}$ stay inside the sphere $\|\vz_t\|<1$; in the second stage the iterates stay on the unit ball $\|\vz_t\|=1$. Furthermore, we can show that the first stage ends with finite number of iterations. Before that, we introduce the following lemma:
\begin{lemma}
Considering the first stage, when iterates $\{\vz_t\}$ are inside unit sphere $\|\vz_t\|_2\le 1$, i.e. under the update rule $\vz_{t+1}=\vz_t-\eta\nabla f(\vz_t)$, and under assumption that $\vz_t^\intercal\nabla f(\vz_t)\le 0$, we will have $\vz_t^\intercal \mH\nabla f(\vz_t)\ge \beta \vz_t^\intercal\nabla f(\vz_t)$ (recall we define $\beta$ as the operator norm of $\mH$).
\end{lemma}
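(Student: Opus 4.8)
The plan is to diagonalize $\mH$ and reduce the claimed matrix inequality to a sum of scalar inequalities, one per eigendirection, each of which can be checked using the sign invariant already established in Theorem~\ref{th:converge}. Write the eigendecomposition $\mH=\sum_{i=1}^n\lambda_i\vv_i\vv_i^\intercal$ and, for the current iterate, abbreviate the projections $z^{(i)}=\vz_t^\intercal\vv_i$ and $g^{(i)}=\vg^\intercal\vv_i$. Since $\nabla f(\vz_t)=\mH\vz_t+\vg$, its projection onto $\vv_i$ is $w^{(i)}:=\vv_i^\intercal\nabla f(\vz_t)=\lambda_i z^{(i)}+g^{(i)}$. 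The entire statement concerns the scalar $\vz_t^\intercal\mH\nabla f(\vz_t)-\beta\,\vz_t^\intercal\nabla f(\vz_t)$, and the goal is to show it is nonnegative.

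First I would expand this difference in the eigenbasis. Using $\nabla f(\vz_t)=\mH\vz_t+\vg$ and collecting terms gives
\[
\vz_t^\intercal\mH\nabla f(\vz_t)-\beta\,\vz_t^\intercal\nabla f(\vz_t)=\sum_{i=1}^n(\lambda_i-\beta)\,z^{(i)}w^{(i)},\qquad w^{(i)}=\lambda_i z^{(i)}+g^{(i)}.
\]
Next I would fix the sign of each factor. Because $\beta=\|\mH\|_{\mathrm{op}}=\max_i|\lambda_i|\ge\lambda_i$, every coefficient satisfies $\lambda_i-\beta\le 0$. For the remaining factor, recall from the proof of Theorem~\ref{th:converge} that the coordinatewise sign invariant $z^{(i)}g^{(i)}\le 0$ is preserved for all $i$ and all $t$; combined with $\lambda_i\le 0$ (the Hessian of the poisoning objective is $\mH=-\mK^\intercal\mK\preceq\vzero$) this yields $z^{(i)}w^{(i)}=\lambda_i(z^{(i)})^2+z^{(i)}g^{(i)}\le 0$. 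Hence each summand $(\lambda_i-\beta)\,z^{(i)}w^{(i)}$ is a product of two nonpositive numbers, so it is nonnegative, and the whole sum — i.e. the difference above — is nonnegative, which is exactly the claim. Equivalently, one may split the difference as $\sum_i(\lambda_i^2-\beta\lambda_i)(z^{(i)})^2+\sum_i(\lambda_i-\beta)\,z^{(i)}g^{(i)}$ and note that, for $\lambda_i\le 0$, the weight $\lambda_i^2-\beta\lambda_i\ge 0$ multiplies a nonnegative square while the weight $\lambda_i-\beta\le 0$ multiplies a nonpositive cross term, so both groups are nonnegative.

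The only place the coordinatewise argument can break is in directions with $\lambda_i>0$: there $z^{(i)}w^{(i)}=\lambda_i(z^{(i)})^2+z^{(i)}g^{(i)}$ may be positive even though $z^{(i)}g^{(i)}\le 0$, so a single summand need not be nonnegative. This is the main obstacle, and it is precisely where the \emph{global} hypothesis $\vz_t^\intercal\nabla f(\vz_t)\le 0$ (rather than the coordinatewise invariant) would have to be brought in to couple the positive-curvature coordinates to the negative ones. In the setting that actually matters here this difficulty does not arise: the quadratic comes from $\mH=-\mK^\intercal\mK$, which is negative semidefinite, so all $\lambda_i\le 0$ and the coordinatewise argument above is already complete. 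I would therefore carry out the proof under $\mH\preceq\vzero$, remarking that in this regime the stated hypothesis $\vz_t^\intercal\nabla f(\vz_t)\le 0$ is in fact implied by (and weaker than) the coordinatewise sign invariant inherited from Theorem~\ref{th:converge}.
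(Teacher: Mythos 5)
Your eigenbasis expansion of the difference
\[
\vz_t^\intercal\mH\nabla f(\vz_t)-\beta\,\vz_t^\intercal\nabla f(\vz_t)=\sum_{i=1}^n(\lambda_i-\beta)\,\vz_t^{(i)}\big(\lambda_i\vz_t^{(i)}+\vg^{(i)}\big)
\]
is correct, and for $\mH\preceq\vzero$ your coordinatewise argument is clean and complete: $\lambda_i-\beta\le 0$ always, and the sign invariant $\vz_t^{(i)}\vg^{(i)}\le 0$ inherited from the convergence theorem forces $\vz_t^{(i)}(\lambda_i\vz_t^{(i)}+\vg^{(i)})=\lambda_i(\vz_t^{(i)})^2+\vz_t^{(i)}\vg^{(i)}\le 0$, so every summand is nonnegative. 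This is genuinely more elementary than the paper's route, and, as you observe, it does not even use the hypothesis $\vz_t^\intercal\nabla f(\vz_t)\le 0$.

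The gap is that the lemma is stated for the general trust region solver, whose input is a symmetric \emph{indefinite} $\mH$; the appendix explicitly takes the spectrum to be $\lambda_1\le\dots\le 0\le\dots\le\lambda_n$, so positive eigenvalues are allowed and $\beta$ may equal $\lambda_n>0$. You correctly identify that in a direction with $\lambda_i>0$ the summand can be negative, but you do not close this gap --- you retreat to the negative semidefinite special case. That restriction is defensible for the concrete poisoning objective ($\mH=-\mK^\intercal\mK$), but it does not prove the lemma as stated, nor does it support the later theorems, which are phrased for the general solver. The paper's proof handles the indefinite case by a different device: it tracks the normalized coordinates $\vw_t^{(i)}=\vz_t^{(i)}/(-\eta\vg^{(i)})$, shows that the set of indices with $\vw_t^{(i)}\ge\vw_{t+1}^{(i)}$ is upward closed in the eigenvalue ordering (so there is a single crossover index $i^*$), splits $\frac{1}{\eta}\sum_i\lambda_i\vz_t^{(i)}(\vz_t^{(i)}-\vz_{t+1}^{(i)})$ into a nonpositive block ($i<i^*$) and a nonnegative block ($i\ge i^*$), lower-bounds both blocks by $\lambda_{i^*}$ times the full sum, and only then invokes the global hypothesis $\vz_t^\intercal\nabla f(\vz_t)\le 0$ together with $\lambda_{i^*}\le\beta$. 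That coupling of the positive- and negative-curvature coordinates through the crossover index is precisely the ingredient missing from your write-up; to establish the lemma in the generality in which it is stated you would need to supply it or an equivalent argument.
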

\begin{proof}
We first define $\vw_t^{(i)}=\vz_t^{(i)}/(-\eta \vg^{(i)})$, then by iteration rule $\vz_{t+1}=(\mI-\eta\mH)\vz_t-\eta\vg$, projecting both sides on $\vv_i$,
\[
\vz_{t+1}^{(i)}=(1-\eta\lambda_i)\vz_t^{(i)}-\eta\vg^{(i)},
\]
dividing both sides by $-\eta\vg^{(i)}$, we get
\begin{equation}\label{eq:z-iter}
\vw_{t+1}^{(i)}=(1-\eta\lambda_i)\vw_t^{(i)}+1,
\end{equation}
solving this geometric series, we get:
\begin{equation}\label{eq:z_formula}
\vw_t^{(i)}=(1-\eta\lambda_i)^t\big(\vw_0^{(i)}-\frac{1}{\eta\lambda_i}\big)+\frac{1}{\eta\lambda_i},
\end{equation}
suppose at $t$-th iteration we have $\vw_t^{(i)}\ge \vw_{t+1}^{(i)}$, after plugging in Eq.~\eqref{eq:z_formula} and noticing $0<\eta\lambda_i<1$
\begin{equation}\label{eq:z0-condition}
\vw_0^{(i)}-\frac{1}{\eta\lambda_i}\ge (1-\eta\lambda_i)\big(\vw_0^{(i)}-\frac{1}{\eta\lambda_i}\big).
\end{equation}
Furthermore, from Assumption~\ref{as:init} we know that $\vw_0^{(i)}=\vz_t^{(i)}/(-\eta \vg^{(i)})=\alpha\frac{1}{\eta\|\vg\|_2}\ge 0$, if we have $\vw_0^{(i)}-\frac{1}{\eta\lambda_i}\le 0$, equivalently $0<\eta\lambda_i\le 1/\vw_0^{(i)}$ then by Eq.~\eqref{eq:z0-condition} we know $1-\eta\lambda_i\ge 1\Leftrightarrow \eta\lambda_i\le 0$ leading to a contradiction, so it must hold that 
\[
\vw_0^{(i)}-\frac{1}{\eta\lambda_i}>0 \text{ and } 1-\eta\lambda_i\le 1.
\]
\par
At the same time, the eigenvalues are nondecreasing, $\lambda_j\ge \lambda_i$ for $j\ge i$, which means 
\begin{equation}\label{eq:lam_j}
1-\eta\lambda_j\le 1.
\end{equation}
Also recalling the initialization condition implies $\vw_0^{(j)}=\alpha\frac{1}{\eta\|\vg\|_2}=\vw_0^{(i)}$, subtracting both sides by $1/\eta\lambda_j$ and noticing $\lambda_j\ge\lambda_i$
\begin{equation}\label{eq:positive_j}
\vw_0^{(j)}-\frac{1}{\eta\lambda_j}=\vw_0^{(i)}-\frac{1}{\eta\lambda_j}\ge \vw_0^{(i)}-\frac{1}{\eta\lambda_i}>0.
\end{equation}
Combining Eq.~\eqref{eq:lam_j} with Eq.~\eqref{eq:positive_j}, we can conclude that if $\vw_t^{(i)}\ge\vw_{t+1}^{(i)}$ holds, then such relation also holds for index $j>i$
\begin{equation}\label{eq:transmitting}
\vw_0^{(j)}-\frac{1}{\eta\lambda_j}\ge (1-\eta\lambda_j)\big(\vw_0^{(j)}-\frac{1}{\eta\lambda_j}\big) \Longleftrightarrow \vw_t^{(j)}\ge \vw_{t+1}^{(j)} \text{ for } j\ge i.
\end{equation}
Consider at any iteration time $t$, suppose $i^*\in[n]$ is the smallest coordinate index such that $\vw_t^{(i^*)}\ge \vw_{t+1}^{(i^*)}$, and hence $\vw_t^{i}<\vw_{t+1}^{i}$ holds for all $i<i^*$. By Eq.~\eqref{eq:transmitting} we know  and $\vw_t^{i}\ge \vw_{t+1}^{i}$ for any $i\ge i^*$ (such a $i^*$ may not exist, but it doesn't matter). By analyzing the sign of $\vz_t$ we know:
$$
\sign\Big(\vz_t^{i}\big(\vz_t^{(i)}-\vz_{t+1}^{(i)}\big)\Big)=\sign\Big(\vw_t^{(i)}\big(\vw_t^{(i)}-\vw_{t+1}^{(i)}\big)\Big)=\sign\big(\vw_t^{(i)}-\vw_{t+1}^{(i)}\big), \forall i\in[n],
$$
the second equality is true due to Eq.~\eqref{eq:z-iter}, we know $\vw_t^{(i)}>0$ for all $i$ and $t$. 
\par
We complete the proof by following inequalities:
\begin{equation}
\begin{aligned}
\vz_t^\intercal \mA\nabla f(\vz_t)
&=\frac{1}{\eta}\sum_{i=1}^{i^*-1}\underbrace{\lambda_i\vz_t^{(i)}(\vz_t^{(i)}-\vz_{t+1}^{(i)})}_{\le 0}+\frac{1}{\eta}\sum_{i=i^*}^n\underbrace{\lambda_i\vz_t^{(i)}(\vz_t^{(i)}-\vz_{t+1}^{(i)})}_{\ge 0}\\
&\ge \frac{\lambda_{i^*-1}}{\eta}\sum_{i=1}^{i^*-1}\vz_t^{(i)}(\vz_t^{(i)}-\vz_{t+1}^{(i)})+\frac{\lambda_{i^*}}{\eta}\sum_{i=i^*}^n\vz_t^{(i)}(\vz_t^{(i)}-\vz_{t+1}^{(i)})\\
&\ge \frac{\lambda_{i^*}}{\eta}\sum_{i=1}^{n}\vz_t^{(i)}(\vz_t^{(i)}-\vz_{t+1}^{(i)})\\
&\ge \beta \vz_t^\intercal\nabla f(\vz_t).
\end{aligned}
\end{equation}
Where the last inequality follows from assumption in this lemma.
\end{proof}
By applying this lemma on the iterates $\{\vz_t\}$ that are still inside the sphere, we will eventually conclude that $\|\vz_t\|_2$ monotone increases. In fact, we have the following theorem:
\begin{theorem}
Suppose $\{\vz_t\}$ is in the region $\|\vz_t\|_2<1$, such that proximal gradient update equals to plain GD: $\vz_{t+1}=\vz_t-\eta\nabla f(\vz_t)$, then under this update rule, $\|\vz_t\|$ is monotone increasing.
\end{theorem}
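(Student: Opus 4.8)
The plan is to reduce monotonicity of $\|\vz_t\|$ to the single scalar inequality $\vz_t^\intercal\nabla f(\vz_t)\le 0$, and then to keep that inequality alive across iterations by invoking the preceding lemma. First I would expand the update directly:
\begin{equation*}
\|\vz_{t+1}\|^2=\|\vz_t-\eta\nabla f(\vz_t)\|^2=\|\vz_t\|^2-2\eta\,\vz_t^\intercal\nabla f(\vz_t)+\eta^2\|\nabla f(\vz_t)\|^2 .
\end{equation*}
Hence $\|\vz_{t+1}\|^2-\|\vz_t\|^2=-2\eta\,\vz_t^\intercal\nabla f(\vz_t)+\eta^2\|\nabla f(\vz_t)\|^2$, and it suffices to prove the invariant $\vz_t^\intercal\nabla f(\vz_t)\le 0$ for every $t$ spent inside the sphere: when this holds both terms on the right are nonnegative, so $\|\vz_{t+1}\|\ge\|\vz_t\|$ (with strict increase whenever $\nabla f(\vz_t)\neq\vzero$).

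I would establish the invariant by induction on $t$. For the base case, substituting the initialization $\vz_0=-\alpha\,\vg/\|\vg\|$ of Assumption~\ref{as:init} gives $\vz_0^\intercal\nabla f(\vz_0)=\tfrac{\alpha^2}{\|\vg\|^2}\vg^\intercal\mH\vg-\alpha\|\vg\|$, which is $\le 0$ exactly because the bound $\alpha<\|\vg\|^3/|\vg^\intercal\mH\vg|$ forces $\alpha\,\vg^\intercal\mH\vg<\|\vg\|^3$. For the inductive step the key is that, since $f$ is quadratic, the gradient transforms linearly: $\nabla f(\vz_{t+1})=(\mI_d-\eta\mH)\nabla f(\vz_t)$. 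Writing $\vd_t=\nabla f(\vz_t)$ and expanding,
\begin{equation*}
\vz_{t+1}^\intercal\nabla f(\vz_{t+1})=\vz_t^\intercal\vd_t-\eta\,\vz_t^\intercal\mH\vd_t-\eta\|\vd_t\|^2+\eta^2\,\vd_t^\intercal\mH\vd_t .
\end{equation*}
Now I apply the preceding lemma, $\vz_t^\intercal\mH\vd_t\ge\beta\,\vz_t^\intercal\vd_t$ (valid precisely under the induction hypothesis $\vz_t^\intercal\vd_t\le0$), together with $\vd_t^\intercal\mH\vd_t\le\beta\|\vd_t\|^2$, to obtain the clean factorization $\vz_{t+1}^\intercal\nabla f(\vz_{t+1})\le(1-\eta\beta)\bigl(\vz_t^\intercal\vd_t-\eta\|\vd_t\|^2\bigr)$. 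Since $\eta<1/\beta$ by Assumption~\ref{as:step_size} makes $1-\eta\beta>0$, and both factors in the bracket are $\le 0$, the product is $\le 0$, closing the induction.

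The main obstacle is the inductive step, and specifically the cross term $\vz_t^\intercal\mH\vd_t$: because $\mH$ is indefinite the quadratic form $\vz_t^\intercal\mH\vz_t$ appearing inside $\vz_t^\intercal\nabla f(\vz_t)$ cannot be signed coordinate by coordinate, so the invariant is not self-evident. The preceding lemma is exactly the tool that resolves this, replacing the awkward $\mH$-weighted inner product by the ordinary inner product at the cost of the factor $\beta$; the step-size restriction $\eta<1/\beta$ is then what turns this bound into a contraction-type factor $(1-\eta\beta)\in(0,1)$ and keeps the recursion self-sustaining. Everything else is a one-line expansion, so once the lemma is in hand the argument is short.
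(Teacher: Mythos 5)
Your proof is correct and follows essentially the same route as the paper's: reduce monotonicity of $\|\vz_t\|$ to the invariant $\vz_t^\intercal\nabla f(\vz_t)\le 0$, verify the base case from Assumption~\ref{as:init}, and close the induction by combining the preceding lemma's bound $\vz_t^\intercal\mH\nabla f(\vz_t)\ge\beta\,\vz_t^\intercal\nabla f(\vz_t)$ with $\nabla f(\vz_t)^\intercal\mH\nabla f(\vz_t)\le\beta\|\nabla f(\vz_t)\|^2$ and the step-size condition $\eta<1/\beta$. Your factored form $(1-\eta\beta)\bigl(\vz_t^\intercal\vd_t-\eta\|\vd_t\|^2\bigr)$ is exactly the paper's final inequality, so no further comparison is needed.
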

\begin{proof}
We prove by induction. First of all, notice $\|\vz_{t+1}\|^2=\|\vz_t\|^2-2\eta \vz_t^\intercal\nabla f(\vz_t)+\eta^2\|\nabla f(\vz_t)\|^2$, to prove $\|\vz_{t+1}\|^2\ge\|\vz_t\|^2$, it remains to show $\vz_t^\intercal\nabla f(\vz_t)\le 0$. For $t=0$ we note that
\begin{equation}
    \label{eq:t=0}
    \vz_0^\intercal\nabla f(\vz_0)=\alpha^2\frac{\vg^\intercal\mH\vg}{\|\vg\|^2}-\alpha\|\vg\|\le 0,
\end{equation}
where the last inequality follows from Assumption~\ref{as:init}. Now suppose $\vz_{t-1}^\intercal\nabla f(\vz_{t-1})\le 0$ and by update rule $\vz_t=\vz_{t-1}-\eta\nabla f(\vz_{t-1})$ we know:
$$
\begin{aligned}
\vz_t^\intercal\nabla f(\vz_t)= &\vz_{t-1}^\intercal\nabla f(\vz_{t-1})-\eta\|\nabla f(\vz_{t-1})\|^2-\underbrace{\eta \vz_{t-1}^\intercal A\nabla f(\vz_{t-1})}_{(1)}\\
&\quad +\underbrace{\eta^2\nabla f(\vz_{t-1})^\intercal A\nabla f(\vz_{t-1})}_{(2)}.
\end{aligned}
$$
From Lemma~4 we know $(1)\ge \beta \vz_{t-1}\nabla f(\vz_{t-1})$ and recall $\beta$ is the operator norm of $A$, we have $(2)\le \beta\| \nabla f(\vz_{t-1})\|^2$, combining them together:
\begin{equation}
\vz_t^\intercal \nabla f(\vz_t)\le (1-\beta\eta)\vz_{t-1}^\intercal \nabla f(\vz_{t-1})-\eta(1-\eta\beta)\|\nabla f(\vz_t)\|^2,
\end{equation}
by choosing $\eta<1/\beta$ we proved $\vz_t^\intercal \nabla f(\vz_t)\le 0$. 
\par
Due to induction rule we know that $\vz_t^\intercal\nabla f(\vz_t)\le 0$ holds for all $t$ and moreover, $\|\vz_{t+1}\|$ is monotone increasing.
\end{proof}
We can easily improve the results above, to show that phase I (where $\|\vz_t\|\le 1$) will eventually terminate after finite number of iteration. This is formally described in the following proposition:
\begin{proposition}{(Finite phase I)}
Assuming $\lambda_1<0$, suppose $t^*$ is the index that $\|\vz_{t^*}\|<1$ and $\|\vz_{t^*+1}\|\ge 1$, then $t^*$ is bounded by:
\begin{equation}\label{eq:bound-t}
\begin{aligned}
t^*\le \log(1-\eta\lambda_1)^{-1}\Big[\log\big(\frac{1}{\eta|\vg^{(1)}|}-\frac{1}{\eta\lambda_1}\big)
-\log\big(\frac{-\vz_0^{(1)}}{\eta \vg^{(1)}}-\frac{1}{\eta\lambda_1}\big)\Big].
\end{aligned}
\end{equation}
\end{proposition}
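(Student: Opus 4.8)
The plan is to reduce this vector exit-time question to a scalar one by tracking only the component of $\vz_t$ along the most negative eigendirection $\vv_1$. First I would recall the closed-form recursion~\eqref{eq:z_formula}: with $\vw_t^{(i)}=\vz_t^{(i)}/(-\eta\vg^{(i)})$,
\[
\vw_t^{(i)}=(1-\eta\lambda_i)^t\Big(\vw_0^{(i)}-\frac{1}{\eta\lambda_i}\Big)+\frac{1}{\eta\lambda_i},
\]
so that $\vz_t^{(1)}=-\eta\vg^{(1)}\vw_t^{(1)}$ is known explicitly. Since $\lambda_1<0$ we have $1-\eta\lambda_1>1$, and since $\vw_0^{(1)}=-\vz_0^{(1)}/(\eta\vg^{(1)})=\alpha/(\eta\|\vg\|)>0$ while $1/(\eta\lambda_1)<0$, the factor $\vw_0^{(1)}-1/(\eta\lambda_1)$ is strictly positive. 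Hence $\vw_t^{(1)}$, and therefore $|\vz_t^{(1)}|=|\eta\vg^{(1)}|\,\vw_t^{(1)}$, grows geometrically with base $1-\eta\lambda_1$.

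The key structural observation is the elementary inequality $\|\vz_t\|_2\ge|\vz_t^{(1)}|$: the norm dominates the magnitude of any single projection. Consequently, as soon as the scalar $|\vz_t^{(1)}|$ reaches $1$ the iterate has already left the sphere. Since $t^*$ is by definition the last index with $\|\vz_{t^*}\|_2<1$, and $\|\vz_{t^*}\|_2<1$ forces $|\vz_{t^*}^{(1)}|<1$, it suffices to bound $t^*$ above by the (real-valued) time at which the monotonically increasing quantity $|\vz_t^{(1)}|$ first attains $1$.

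I would then solve this scalar threshold explicitly. The condition $|\vz_t^{(1)}|\ge 1$ is equivalent to $\vw_t^{(1)}\ge 1/(\eta|\vg^{(1)}|)$, i.e.
\[
(1-\eta\lambda_1)^t\ge\frac{\tfrac{1}{\eta|\vg^{(1)}|}-\tfrac{1}{\eta\lambda_1}}{\vw_0^{(1)}-\tfrac{1}{\eta\lambda_1}}.
\]
Taking logarithms and dividing by $\log(1-\eta\lambda_1)>0$ gives exactly the claimed bound after substituting $\vw_0^{(1)}=-\vz_0^{(1)}/(\eta\vg^{(1)})$; the positivity of this divisor, guaranteed by $\lambda_1<0$, is what preserves the direction of the inequality.

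Finally I would verify that both logarithm arguments are positive, so that the expression is well-defined: each of $\tfrac{1}{\eta|\vg^{(1)}|}-\tfrac{1}{\eta\lambda_1}$ and $\tfrac{-\vz_0^{(1)}}{\eta\vg^{(1)}}-\tfrac{1}{\eta\lambda_1}$ is a positive quantity minus a negative one, and the former exceeds the latter because $\tfrac{1}{|\vg^{(1)}|}>\tfrac{\alpha}{\|\vg\|}$ follows from $\alpha<1$ and $\|\vg\|\ge|\vg^{(1)}|$ (Assumption~\ref{as:init}); this simultaneously confirms that the bound is a positive number. The only step demanding care is the bookkeeping of signs, rather than any genuine analytic difficulty.
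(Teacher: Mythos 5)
Your argument is correct and follows essentially the same route as the paper's own (very terse) proof: project the iterates onto the leading eigendirection $\vv_1$, invoke the closed-form geometric recursion~\eqref{eq:z_formula} for $\vw_t^{(1)}$, use $|\vz_t^{(1)}|\le\|\vz_t\|$, and solve the resulting scalar threshold inequality for $t$. If anything, your version is cleaner: you apply the threshold at index $t^*$ (where $\|\vz_{t^*}\|<1$ forces $|\vz_{t^*}^{(1)}|<1$), which gets the inequality direction right and also verifies the positivity of both logarithm arguments, details the paper's one-line chain glosses over.
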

\begin{proof}
This directly follows from:
$$
1\le \eta^2\vg^{(1)2}\vw_{t^*+1}^{(1)2}=\vz_{t^*+1}^{(1)2}\le \|\vz_{t^*+1}\|^2,
$$
together with Eq.~\eqref{eq:z_formula} immediately comes to Eq.~\eqref{eq:bound-t}.
\end{proof}
Lastly, it remains to show the converge rate in phase II, this is actually a standard manifold gradient descent problem

\begin{theorem}
Let $\{\vz_t\}$ be an infinite sequence of iterates generated by line search gradient descent, then every accumulation point of $\{\vz_t\}$ is a stationary point of the cost function $f$. 
\end{theorem}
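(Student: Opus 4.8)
The plan is to recognize Phase~II as Riemannian gradient descent on the unit sphere $\sS^{d-1}=\{\vz\in\sR^d:\|\vz\|=1\}$ and to prove global convergence to critical points by the standard sufficient-decrease (Armijo) argument. At $\vz\in\sS^{d-1}$ the tangent space is $\{\vv:\vz^\intercal\vv=0\}$ with orthogonal projector $P_\vz=\mI_d-\vz\vz^\intercal$, so the Riemannian gradient of $f$ is $P_\vz\nabla f(\vz)=(\mI_d-\vz\vz^\intercal)(\mH\vz+\vg)$ and the Phase~II iteration is exactly the retraction step $\vz_{t+1}=R_{\vz_t}\big(-\alpha_t P_{\vz_t}\nabla f(\vz_t)\big)$ with $R_\vz(\vv)=(\vz+\vv)/\|\vz+\vv\|$. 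A point is stationary precisely when $P_\vz\nabla f(\vz)=\vzero$, so the goal is to show this projected gradient vanishes at every accumulation point.

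First I would record that the line search enforces an Armijo-type sufficient-decrease condition
\[
f(\vz_t)-f(\vz_{t+1})\ge \sigma\,\alpha_t\,\|P_{\vz_t}\nabla f(\vz_t)\|^2,
\]
with $\sigma\in(0,1)$ the line-search parameter of Algorithm~\ref{alg:tr_solver}. Because $f$ is continuous on the compact set $\sS^{d-1}$ it is bounded below, and this condition makes $\{f(\vz_t)\}$ monotonically nonincreasing, hence convergent, so $f(\vz_t)-f(\vz_{t+1})\to 0$. Summing the inequality gives $\sum_t\alpha_t\|P_{\vz_t}\nabla f(\vz_t)\|^2<\infty$, and in particular $\alpha_t\|P_{\vz_t}\nabla f(\vz_t)\|^2\to 0$.

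Next I would argue by contradiction. Let $\vz^*$ be an accumulation point with $\vz_{t_k}\to\vz^*$ and suppose $P_{\vz^*}\nabla f(\vz^*)\ne\vzero$. By continuity of $\vz\mapsto P_\vz\nabla f(\vz)$ the norms $\|P_{\vz_{t_k}}\nabla f(\vz_{t_k})\|$ stay bounded away from zero for large $k$, so the limit above forces $\alpha_{t_k}\to 0$; in a backtracking scheme this means the pre-shrunk trial step violated the Armijo test. Writing $\vu_t=-P_{\vz_t}\nabla f(\vz_t)$ and expanding the pullback $\hat f_{\vz_t}(s)=f\big(R_{\vz_t}(s\vu_t)\big)$, the retraction property $\tfrac{d}{ds}R_{\vz_t}(s\vu_t)\big|_{s=0}=\vu_t$ gives first-order slope $-\|P_{\vz_t}\nabla f(\vz_t)\|^2$, while the quadratic form of $f$ (so that $\nabla f$ is $\beta$-Lipschitz) together with the smoothness of $R$ bounds the second-order remainder by $\tfrac{L}{2}s^2\|P_{\vz_t}\nabla f(\vz_t)\|^2$ for a constant $L$ that is uniform over $\sS^{d-1}$. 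This yields $\hat f_{\vz_t}(s)\le f(\vz_t)-s\|P_{\vz_t}\nabla f(\vz_t)\|^2+\tfrac{L}{2}s^2\|P_{\vz_t}\nabla f(\vz_t)\|^2$, so the Armijo condition holds for every $s\le 2(1-\sigma)/L$ independently of the gradient magnitude. Hence the accepted step cannot fall below a fixed positive multiple of $2(1-\sigma)/L$, contradicting $\alpha_{t_k}\to 0$.

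Therefore $P_{\vz^*}\nabla f(\vz^*)=\vzero$, i.e.\ every accumulation point is a stationary point of $f$ on the sphere. I expect the uniform second-order bound $L$ on the pullbacks $\hat f_\vz$ — the step that couples the Euclidean smoothness of the quadratic to the extra curvature introduced by the normalization retraction — to be the main technical obstacle; compactness of $\sS^{d-1}$ and smoothness of $R$ are exactly what deliver a single $L$ valid at all base points and thereby rule out step-size collapse.
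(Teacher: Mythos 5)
Your proof is correct. Note that the paper does not actually prove this statement itself: it defers both this theorem and the subsequent linear-rate theorem to Theorems 4.3.1 and 4.5.6 of Absil et al.\ (2009), and what you have written is precisely the standard sufficient-decrease argument behind that cited result, specialized to the sphere with the normalization retraction $R_{\vz}(\vv)=(\vz+\vv)/\|\vz+\vv\|$. The one detail worth keeping explicit is the observation you make implicitly: because the search direction $\vu_t=-(\mI_d-\vz_t\vz_t^\intercal)(\mH\vz_t+\vg)$ is tangent at $\vz_t$, one has $\|\vz_t+s\vu_t\|=\sqrt{1+s^2\|\vu_t\|^2}\ge 1$, so the retraction and its derivatives are uniformly controlled on the compact sphere and the single constant $L$ bounding the second derivative of the pullback --- the step you correctly flag as the crux for ruling out step-size collapse --- does exist.
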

\begin{theorem}\label{th:linear-converge}
Let $\{\vz_t\}$ be an infinite sequence of iterates generated by line search gradient descent, suppose it converges to $\vz^*$. Let $\lambda_{\mH,\min}$ and $\lambda_{\mH,\max}$ be the smallest and largest eigenvalues of the Hessian at $\vz^*$. Assume that $\vz^*$ is a local minimizer then $\lambda_{\mH,\min}>0$ and given $r$ in the interval $(r_*, 1)$ with $r_*=1-\min\big(2\sigma\bar{\alpha}\lambda_{\mH,\min}, 4\sigma(1-\sigma)\beta\frac{\lambda_{\mH,\min}}{\lambda_{\mH,\max}}\big)$, there exists an integer $K$ such that:
$$
f(\vz_{t+1})-f(\vz^*)\le r\big(f(\vz_t)-f(\vz^*)\big),
$$
for all $t\ge K$.
\end{theorem}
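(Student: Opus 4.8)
The plan is to read Phase~II of Algorithm~\ref{alg:tr_solver} as Riemannian steepest descent with Armijo backtracking line search on the unit sphere $\mathcal{S}=\{\vz:\|\vz\|_2=1\}$: the search direction is the negative Riemannian gradient $-\text{grad} f(\vz_t)=-(\mI_d-\vz_t\vz_t^\intercal)(\mH\vz_t+\vg)$, and the normalization step is exactly the projection retraction $R_{\vz}(\xi)=(\vz+\xi)/\|\vz+\xi\|$. Theorem~\ref{th:converge} already guarantees that $\{\vz_t\}$ converges to the stationary point $\vz^*$, so what remains is the \emph{local} linear rate. I would obtain it by instantiating the classical local convergence analysis of line-search descent (Absil--Mahony--Sepulchre) on $\mathcal{S}$, with the rate dictated by the slowest eigen-direction of the Riemannian Hessian at $\vz^*$.

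First I would set up a local quadratic model. Since $\vz^*$ is a local minimizer on $\mathcal{S}$, its Riemannian gradient vanishes and its Riemannian Hessian $A:=\mathrm{Hess}\,f(\vz^*)$ is positive semidefinite; nondegeneracy upgrades this to $\lambda_{\mH,\min}>0$, so all eigenvalues $\mu_1\le\cdots\le\mu_{d-1}$ of $A$ (acting on $T_{\vz^*}\mathcal{S}$) lie in $[\lambda_{\mH,\min},\lambda_{\mH,\max}]$. Writing $\xi_t=R_{\vz^*}^{-1}(\vz_t)$ for the displacement in the tangent space and expanding the pullback $f\circ R_{\vz^*}$, I get $f(\vz_t)-f(\vz^*)=\tfrac12\langle A\xi_t,\xi_t\rangle(1+o(1))$ and $\text{grad} f(\vz_t)=A\xi_t(1+o(1))$ as $\vz_t\to\vz^*$. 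Diagonalizing $A$ and writing $\xi_t^{(i)}$ for the components of $\xi_t$, this gives $\|\text{grad} f(\vz_t)\|^2=\sum_i\mu_i^2\,\xi_t^{(i)2}(1+o(1))$ and $f(\vz_t)-f(\vz^*)=\tfrac12\sum_i\mu_i\,\xi_t^{(i)2}(1+o(1))$.

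Next I would turn the line-search guarantee into a contraction. Armijo acceptance gives $f(\vz_t)-f(\vz_{t+1})\ge\sigma\alpha_t\|\text{grad} f(\vz_t)\|^2$, and dividing by $f(\vz_t)-f(\vz^*)$ yields
\begin{equation}
\frac{f(\vz_t)-f(\vz_{t+1})}{f(\vz_t)-f(\vz^*)}\ge 2\sigma\alpha_t\,\frac{\sum_i\mu_i^2\,\xi_t^{(i)2}}{\sum_i\mu_i\,\xi_t^{(i)2}}(1-o(1))\ge 2\sigma\alpha_t\lambda_{\mH,\min}(1-o(1)),
\end{equation}
where the last inequality holds because the ratio is a weighted average of the $\mu_i$ with nonnegative weights $\mu_i\xi_t^{(i)2}$, hence at least $\mu_1\ge\lambda_{\mH,\min}$. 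It then remains to lower-bound the accepted step $\alpha_t$ in two cases: either the initial trial step is accepted and $\alpha_t=\bar\alpha$; or backtracking occurs, in which case the rejected trial step $\alpha_t/\beta$ violated Armijo, and a descent-lemma estimate for the pullback (whose gradient is Lipschitz near $\vz^*$ with constant controlled by $\lambda_{\mH,\max}$) forces $\alpha_t\ge 2(1-\sigma)\beta/\lambda_{\mH,\max}$. Substituting the two bounds into the displayed inequality produces the contraction factor $1-r_*=\min\big(2\sigma\bar\alpha\lambda_{\mH,\min},\,4\sigma(1-\sigma)\beta\,\lambda_{\mH,\min}/\lambda_{\mH,\max}\big)$, so that $f(\vz_{t+1})-f(\vz^*)\le r\big(f(\vz_t)-f(\vz^*)\big)$ once the $o(1)$ slack is absorbed by choosing $r>r_*$ and $K$ large enough that $\vz_t$ stays in the neighborhood where all estimates hold for $t\ge K$.

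The hard part will be the manifold bookkeeping rather than the algebra: I must control the higher-order remainders of the retraction $R_{\vz^*}$ uniformly on a neighborhood of $\vz^*$ so that the quadratic model genuinely sandwiches both $f-f(\vz^*)$ and $\|\text{grad} f\|$, and I must establish the Lipschitz-gradient (descent-lemma) bound for the pullback that underlies the backtracking lower bound on $\alpha_t$. These neighborhood-uniform smoothness estimates are precisely what forces the conclusion to hold only for $t\ge K$ and for any $r\in(r_*,1)$ rather than at $r=r_*$; with them in hand, the contraction chains exactly as in the Euclidean quadratic case.
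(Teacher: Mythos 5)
Your proposal is correct and follows the same route as the paper: the paper's entire proof of this theorem is a citation to Theorems 4.3.1 and 4.5.6 of Absil, Mahony and Sepulchre (2009), and your sketch is precisely the standard internal argument of that cited result (Armijo decrease against the local quadratic model, plus the two-case lower bound on the accepted step, yielding $1-r_*=\min\big(2\sigma\bar\alpha\lambda_{\mH,\min},\,4\sigma(1-\sigma)\beta\lambda_{\mH,\min}/\lambda_{\mH,\max}\big)$). The one substantive piece you gloss over is the claim $\lambda_{\mH,\min}>0$ itself: you invoke ``nondegeneracy'' as an extra hypothesis, whereas the paper derives strict positivity from the structure of the trust-region problem. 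Concretely, it computes the Riemannian Hessian of $f$ on the sphere through the projection retraction and obtains $\langle\mathrm{Hess}\,f(\vz^*)[\xi],\xi\rangle=-\vz^{*\intercal}\mH\vz^*-\vg^\intercal\vz^*+\xi^\intercal\mH\xi=\lambda+\xi^\intercal\mH\xi\ge\lambda+\lambda_1$, and then uses the KKT relation $(\lambda_1+\lambda)\vz^{*(1)}+\vg^{(1)}=0$ together with the easy-case assumption $\vg^{(1)}\ne 0$ to conclude $\lambda+\lambda_1>0$. Adding that computation would close the only gap between your write-up and the full statement; the rest of your argument matches what the cited theorem delivers.
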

\begin{proof}
See Theorem 4.3.1 and Theorem 4.5.6 in [Absil et al., 2009].
\end{proof}
To apply Theorem~\ref{th:linear-converge}, we need to check $\lambda_{\mH, \min}>0$. To do that we can directly calculate its value, by the definition of Riemanndian Hessian, we have
\begin{equation}
\label{eq:Riemannian-hess}
\begin{aligned}
\text{Hess}\ f(\vx)&=\text{Hess}(f\circ \text{Exp}_{\vx})(0_{\vx}),\\
\langle\text{Hess}\ f(\vx)[\xi], \xi\rangle&=\langle\text{Hess}\ (f\circ \text{Exp}_{\vx})(0_{\vx})[\xi], \xi\rangle.
\end{aligned}
\end{equation}
Then for $\xi\in T_x\mathcal{M}$,
\begin{equation}
\langle\text{Hess}\ f(\vx)[\xi], \xi\rangle=\langle\text{Hess}\ (f\circ R_{\vx})(0_{\vx})[\xi], \xi\rangle=\eval[2]{\od[2]{}{t}f(R_{\vx}(t\xi))}_{t=0},
\end{equation}
we then expand $f(R_{\vx}(t\xi))$ to,
\begin{equation}
f(R_{\vx}(t\xi))=\frac{\xi^\intercal \mH\xi\cdot t^2+\xi^\intercal \mH{\vx}\cdot t+{\vx}^\intercal \mH\vx}{2\|\vx+t\xi\|_2^2}+\frac{\vg^\intercal \vx+\vg^\intercal \xi\cdot t}{\|\vx+t\xi\|_2^2}.
\end{equation}
By differentiating $t$ twice and set $t=0$ (this can be done by software), we finally get
\begin{equation}
    \langle\text{Hess}\ f(\vx)[\xi], \xi\rangle=-\vx^{\intercal}\mH\vx+\xi^\intercal \mH\xi-\vg^\intercal \vx.
\end{equation}
Taking $\vx=\vz^*$ into above equation, we get 
\[
\langle\text{Hess}\ f(\vz^*)[\xi], \xi\rangle=-\vz^{*{\intercal}}\mH\vz^*+\xi^\intercal \mH\xi-\vg^\intercal \vz^*
\]
On the other hand, by optimal condition \eqref{eq:KKT}, we have:
\begin{equation}
    \label{eq:convex}
    \vz^{*\intercal}(\mH+\lambda\mI_d)\vz^*+\vz^{*\intercal} \vg=0\Longrightarrow -\vz^{*\intercal} H\vz^*-\vg^\intercal \vz^*=\lambda,
\end{equation}
so $\langle\text{Hess}f(\vz^*)[\xi], \xi\rangle=\lambda+\xi^\intercal \mH\xi\ge \lambda+\lambda_1\overset{!}{>} 0$. Where $\overset{!}{>}$ is guaranteed by gradient condition in \eqref{eq:KKT}:
\begin{equation}
(\lambda_1+\lambda)\vz^{*{(1)}}+\vg^{(1)}=0,
\end{equation}
in ``easy-case'', $\vg^{(1)}\ne 0$, so $\lambda_1+\lambda\ne 0$ and Hessian condition in \eqref{eq:KKT} can be improved to $\lambda_1+\lambda>0$.
\par
Based on above discussion, we know $\lambda_{H,\min}\ge \lambda+\lambda_1>0$ and $\lambda_{H,\max}\le \lambda+\lambda_n$.

\section{Supplementary Experiments on Trust Region Solver}
We sample an indefinite random matrix by $\mH=\mB\mB^\intercal-\lambda \mI_n$, where $\mB\in\mathbb{R}^{n\times (n-1)}$ and $\mB_{ij}\overset{\text{iid}}{\sim}\mathcal{N}(0,1)$, obviously $\lambda_{\min}(\mH)=\lambda_1=-\lambda$. Afterwards we sample a vector $\vg$ by $\vg_i\overset{\text{iid}}{\sim}\mathcal{N}(0,1)$. 
it is totally fine to ignore the hard case, because the probability is zero. 
By changing the value of $\lambda$ in $\{10,30,50,70,90,110\}$, we plot the function value decrement with respect to number of iterations in Figure~\ref{fig:tr_experiment}(left). As we can see, the iterates first stay inside of the sphere (phase I) for a few iterations and then stay on the boundary (phase II). To inspect how $\lambda$ changes the duration of phase I, we then plot the number of iterations it takes to reach phase II, under different $\lambda$ values shown in Figure~\ref{fig:tr_experiment}(right). Recall in \eqref{eq:bound-t}, number of iterations is bounded as a function of $\lambda$, which can be further simplified to:
\begin{equation}
    \label{eq:simple-niter}
    t^*\le \frac{\log(1+\frac{\lambda}{|\vg^{(1)}|})}{\log(1+\eta\lambda)}=\frac{\log(1+c_1\lambda)}{\log(1+c_2\lambda)},
\end{equation}
where we set $\vz_0^{(1)}=0$ to simplify the formula. By fitting the data point with function $T(\lambda)=\frac{\log(1+c_1\lambda)}{\log(1+c_2\lambda)}$, we find our bounds given by Lemma~6 is quite accurate.

\begin{figure*}[htb]
    \centering
\includegraphics[width=0.7\linewidth]{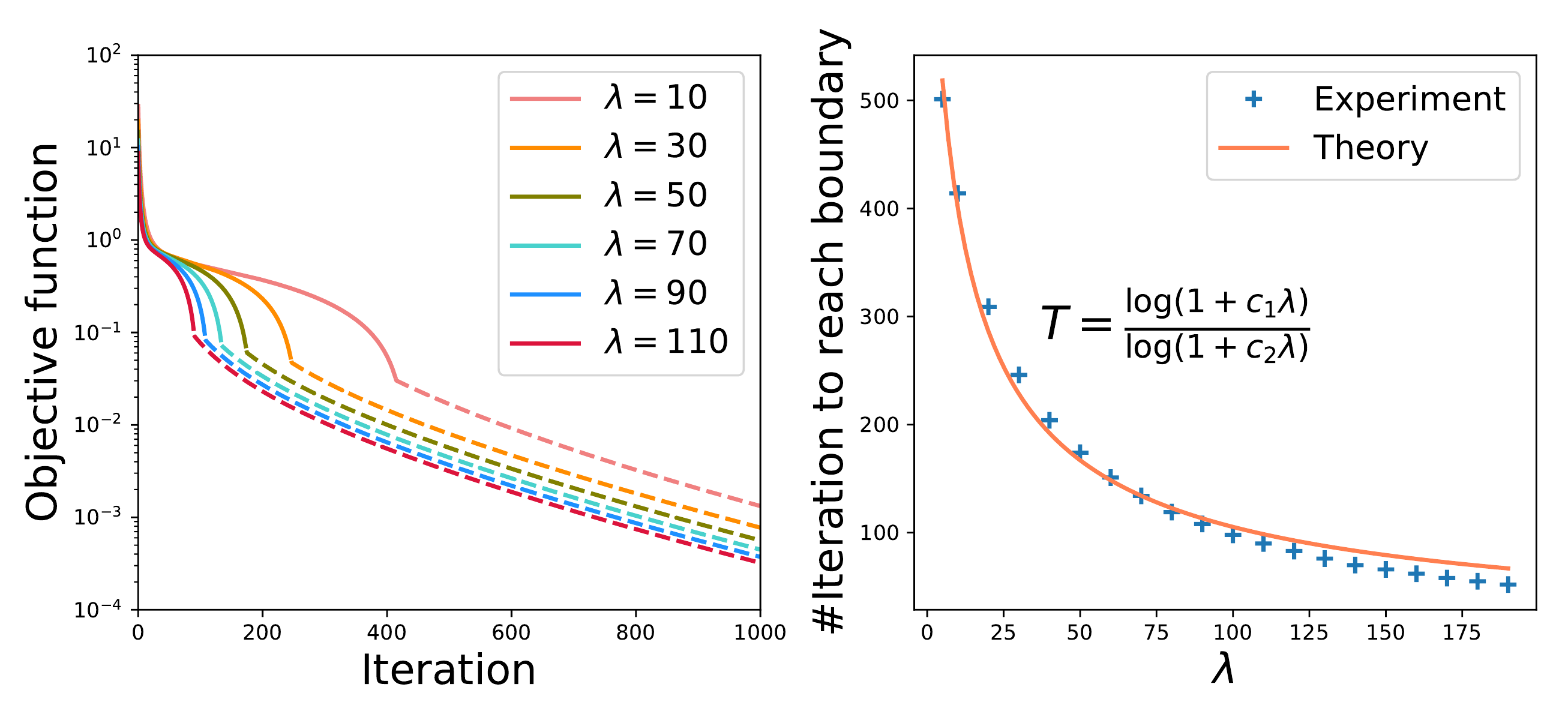}
    \caption{\textit{Left}: Trust region experiment, we use solid lines to indicate iterations inside the sphere and dash lines to indicate iterations on the sphere. By changing $\lambda$ we can modify the function curvature. \textit{Right}: \#Iteration it takes to reach sphere under different $\lambda$'s, we also fit the curve by model $T=\frac{\log(1+c_1\lambda)}{\log(1+c_2\lambda)}$ derived in Eq.~\eqref{eq:bound-t}.}
    \label{fig:tr_experiment}
\end{figure*}
\section{Baselines}
There are three baselines included in the experiments, namely random noise, degree-based poisoning and PageRank-based poisoning. The first baseline, random noise, is the simplest one. For continuous label, the perturbation is created by $\vdelta_{\vy}=d_{\max}\frac{\epsilon}{\|\epsilon\|_2}$ with $\epsilon\sim\mathcal{N}(0, 1)$; for discrete label, we randomly choose $c_{\max}$ indices $\{k_1, k_2, \dots, k_{c_{\max}}\}$ from $\{1,2,\dots, n\}$ and then set $\vdelta_{\vy}[{k_i}]=-1$.
 \par
 As to degree-based poisoning, we first calculate the degree vector $\mathrm{deg}[i]=\sum_{j=1}^n\mS_{ij}$ of all nodes, then for continuous label we load the perturbation weighted by degree, i.e. $|\vdelta_{\vy}[i]|=d_{\max}\sqrt{\frac{\mathrm{deg}^2[i]}{\sum_{j=1}^n\mathrm{deg}^2[j]}}$, and the sign of $\vdelta_{\vy}[i]$ is determined by the gradient of loss on $\vdelta_{\vy}$ at $\vdelta_{\vy}=\bm{0}$. Specifically:
 \[
 \sign(\vdelta_{\vy})=\sign\left(\frac{\partial L(\vdelta_{\vy})}{\partial \vdelta_{\vy}}|_{\vdelta_{\vy}=0}\right),
 \]
this makes sure that the direction is good enough to increase the prediction loss of SSL models. For discrete label, we simply choose the largest $c_{\max}$ training labels to flip: $\vdelta_{\vy}[i]=-1$ if and only if node-$i$ has many neighboring nodes. This is to maximize the influence of perturbations.
 \par
Similarly, we can also a PageRank based poisoning attack, the only difference is that we use PageRank to replace the degree score.

 \section{Supplementary Experiments on Data Poisoning Attacks}
In this section, we design more experiments on other cases that are not able show up in the main text. The problem settings and datasets are the same as previous experiments.
\subsection{Sparse and group sparse constraint}
In reality, the adversary may only be able to perturb very small amount of data points, this requirement renders sparse constraint. In specific, we consider
\begin{equation}
\begin{aligned}\label{eq:sparse-constraints}
\mathcal{R}_1&=\{\|\vdelta_y\|_0\le c_{\max}\text{ and } \|\vdelta_y\|_2\le d_{\max}\}, \\
\mathcal{R}_2&=\Big\{\sum_{i=1}^n\sI\{\Delta_x[i,:]\ne \bm{0}\}\le c_{\max}\Big\}.
\end{aligned}
\end{equation}
The constraint on $\mathcal{R}_2$ implies that only a limited number of data can be perturbed, so we enforce a row-wise group sparsity. Both $\mathcal{R}_1$ and $\mathcal{R}_2$ can be added to regression/classification tasks, below we take regression task as an example to show the effectiveness.
\par
For $\mathcal{R}_1$, the optimization problem is essentially a sparse PCA problem 
\begin{equation}
    \label{eq:SparsePCA}
    \begin{aligned}
        \min_{\vdelta_y}\ &-\frac{1}{2}\Big\|(\mD_{uu}-\mS_{uu})^{-1}\mS_{ul}\vdelta_y\Big\|^2_2 \\ 
        \texttt{s.t.}
        &\ \ \|\vdelta_y\|_0\le c_{\max}, \
         \ \|\vdelta_y\|_2\le d_{\max}.
    \end{aligned}
\end{equation}
For this kind of problem, many efficient solvers were proposed during the past decades, including threshold method~\cite{cadima1995loading}, LASSO based method~\cite{zou2006sparse}, or by convex relaxation~\cite{d2008optimal}.

\begin{figure}[t]
    \centering
    \vspace{-5pt}
    \includegraphics[width=0.49\linewidth]{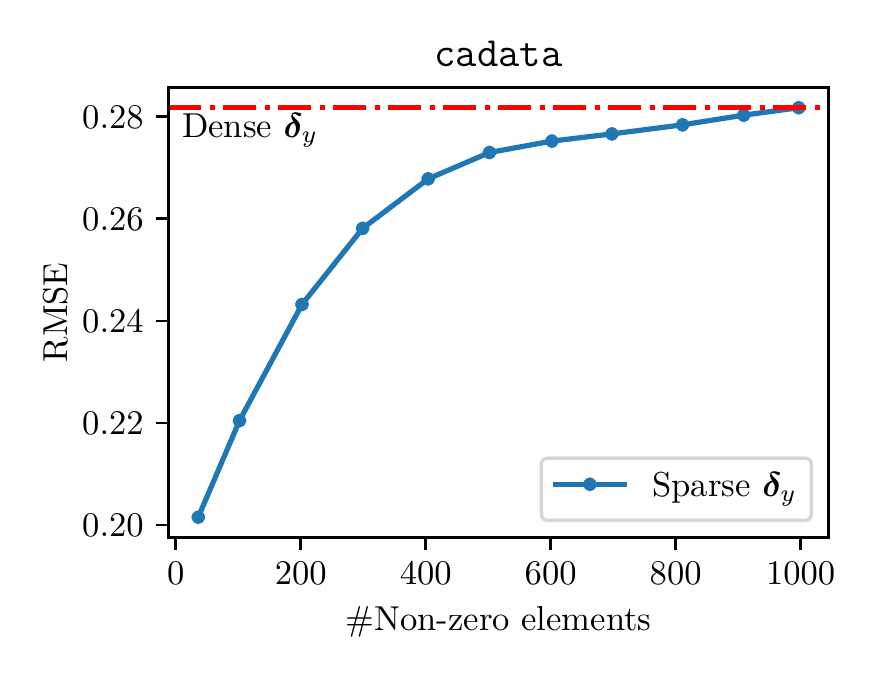}
    \includegraphics[width=0.49\linewidth]{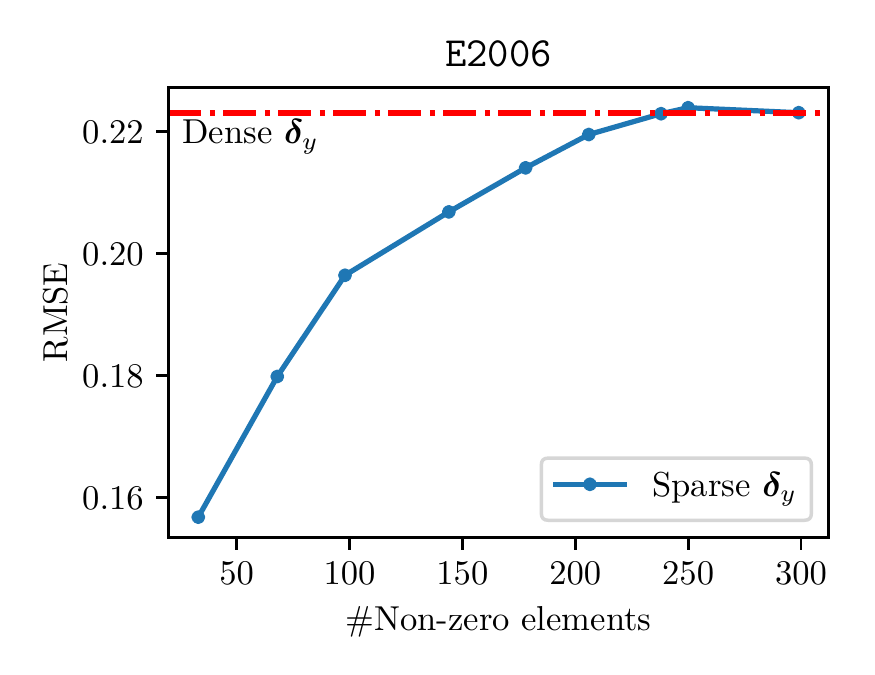}

    \caption{The effectiveness of $\ell_0/\ell_2$-mixed constraints in finding the sparse perturbations $\vdelta_y$. For \texttt{cadata}, we set $n_l=1000$, while for \texttt{E2006} data $n_l=300$. The RMSE results of dense solutions~\eqref{eq:estimate-label} are marked with red dashed lines. }

    \label{fig:sparse_pca}
\end{figure}

In order to solve the sparse PCA problem, we adopt the LASSO based sparse PCA solver~\cite{zou2006sparse}. We conduct the experiment on \texttt{cadata} and \texttt{E2006} data, then plot the  sparsity (measured by \#nnz of $\vdelta_y$) and corresponding RMSE in Figure~\ref{fig:sparse_pca}. For comparison, we also include the RMSE when no $\ell_0$ sparsity constraint is enforced. Interestingly,
we observe that the RMSE increases rapidly as $\vdelta_y$ is relatively sparse, and later it gradually stabilizes before reaching the same RMSE of dense solution. That is to say, when attackers have constraint on the maximal number of perturbation they could make,  our sparse PCA based solution is able to make a good trade-off between sparsity and RMSE.
\par
\begin{figure}
    \centering
    \includegraphics[width=0.5\linewidth]{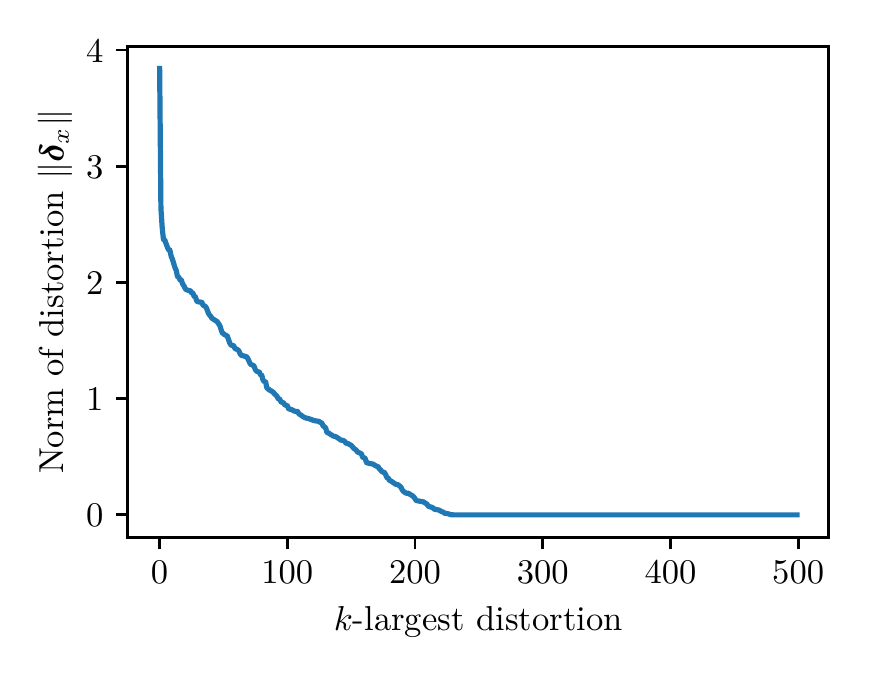}
    \caption{We ranked the perturbations $\vDelta_l$ by their $\ell_2$-norm, to see the decay rate of distortion.}
    \label{fig:norm_decay}
\end{figure}
For $\mathcal{R}_2$, the har dconstraint is replaced with a group LASSO regularizer $\lambda \sum_{i=1}^{n_l}\|\vDelta_l[i,:]\|_2$ and we use proximal gradient descent to solve the optimization problem. By changing the hyper-parameter $\lambda$ we can indirectly change the group sparsity of $\Delta_x$. As above, we run the experiment on \texttt{mnist17} data, result is shown in Figure~\ref{fig:norm_decay}.

\subsection{Data poisoning attack on manifold regularization model}
Apart from label propagation model for semi-supervised learning, our method can also be seamlessly applied to manifold regularization method~\cite{belkin2006manifold}. Manifold regularization based SSL solves the following optimization problem
\begin{equation}
    \label{eq:manifold-regularization-method}
    f^*=\mathop{\arg\min}_{f\in\mathcal{F}}\sum_{i=1}^{n_l}\ell\big(f(\vx_i), \vy_i\big)+\lambda\|f\|^2+\beta\sum_{i,j=1}^{n_l+n_u}\mS_{ij}\big(f(\vx_i)-f(\vx_j)\big)^2,
\end{equation}
where $\mathcal{F}$ is the set of model functions. $(\vx_i,\vy_i)$ is the $i$-th data pair in $(\mX,\vy)$, $\ell(\hat{y}, y)$ is the loss function. The model family $\mathcal{F}$ ranges from linear models $f(\vx)=\vw^\intercal\vx$ to very complex deep neural networks. If we limit our scope to the linear case, then \eqref{eq:manifold-regularization-method} has a closed form solution:
\begin{equation}
    \label{eq:solution-manifold-regularization}
        \vw^*=(\mX_l^\intercal\mX_l+\lambda\mI+\beta\mX^\intercal\mL\mX)^{-1}\mX_l^\intercal\vy_l=\mP\vy_l
    \end{equation}
where $\mX_l$ is the feature matrix of all labeled nodes, $\mX$ is the feature matrix of labeled and unlabeled nodes, $\mL$ is the graph Laplacian. 
\par
Manifold regularization term in \eqref{eq:solution-manifold-regularization} enforces two nodes that are close to each other (i.e. large $\mS_{ij}$) to hold similar labels, and that is similar to the objective of label propagation. This motivates us to extend our algorithms for attacking label propagation to attack manifold regularization based SSL. As an example, we discuss poisoning attack to manifold regularization model for regression task, where the problem can be formulated as
\begin{equation}
    \label{eq:obj-func-attack-manifold}\ 
    \min_{\|\vdelta_y\|\le d_{\max}}-\frac{1}{2}\|\mX_u\mP(\vy_l+\vdelta_y)-\vy_u\|^2_2.
\end{equation}
Clearly, Eq.~\eqref{eq:obj-func-attack-manifold} is again a non-convex trust region problem, and we can apply our trust region problem solver to it. For experiment, we take regression task on \texttt{cadata} as an example, different from label propagation, manifold regularization learns a parametric model $f_{\vw}(\vx)$ that is able to generalize to unseen graph. So for manifold regularization we can do label poisoning attack in both transductive and inductive settings. The experiment result is shown in Figure~\ref{fig:manifold-regularization}. 
\begin{figure}[htb]
    \centering
    \includegraphics[width=0.6\linewidth]{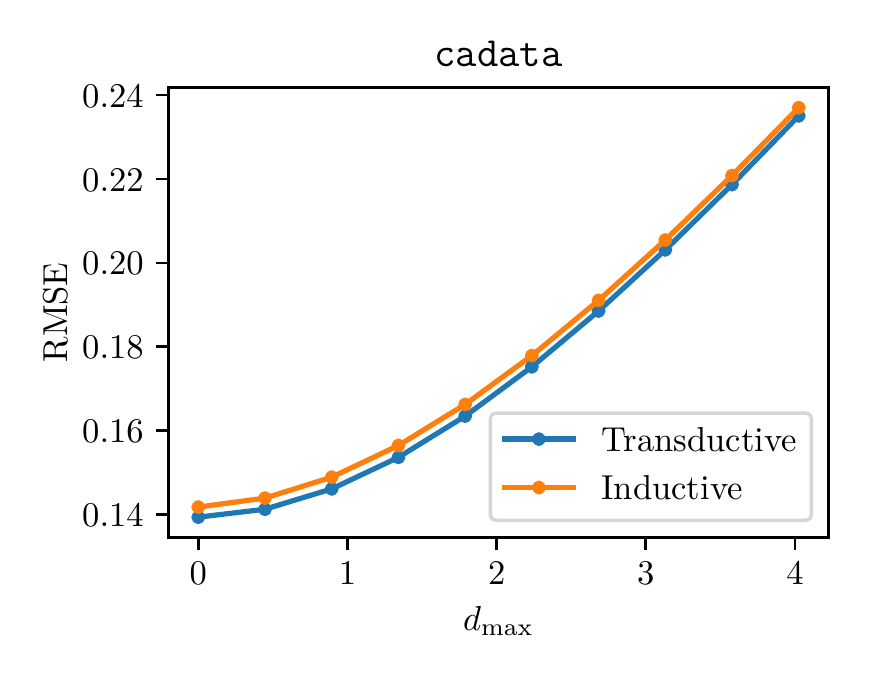}
    \caption{Experiment result of manifold regularization on \texttt{cadata}, here we set $n_l=500$, $n_u=3500$ and the rest $n_g=4000$ data are used for inductive learning.}
    \label{fig:manifold-regularization}
\end{figure}

In this experiment, we do both transductive setting, using the test set $\{\mX_u, \vy_u\}$ as in label propagation, and inductive setting, on a brand new set $\{\mX_{\mathrm{ind}}, \vy_{\mathrm{ind}}\}$ that never been accessed in training stage. We can see that for both settings the label poisoning attack algorithm has equally good performance.


\end{document}